\documentclass[runningheads]{llncs}
\usepackage[T1]{fontenc}
\usepackage{graphicx}
\usepackage{booktabs}
\usepackage[misc]{ifsym}
\newcommand{\corr}{(\Letter)}
\usepackage{graphicx}
\usepackage{amsmath}
\usepackage{amssymb}
\usepackage{subcaption}
\usepackage{booktabs}
\usepackage{algorithm}
\usepackage[hidelinks]{hyperref}

\usepackage{mwe}
\DeclareMathOperator*{\argmax}{arg\,max}
\newcommand{\citet}[1]{\citeauthor{#1}~[\citeyear{#1}]}

\begin{document}
%todo: change title
\title{Measuring Explainer Stability via Attribution Separability}

% Measuring Ranking Stability in Attribution Methods

% Analyzing Feature Importance Distribution to Measure AM stability

% A Strategy to Measure Attribution Stability via Distributional Analysis

% Linking Attribution Stability and Separability 

% Measuring Explainer Stability via Attribution Separability

%falta XAI, Explainability 

\titlerunning{Measuring Explainer Stability via Attribution Separability}
% If the full title of your paper is short enough to also fit in the running head, you can omit the abbreviated paper title here. You can check as follows: if you comment out the \titlerunning line, something will appear in the header of all odd-numbered pages of your PDF from page 3 onward. This something is either the full title (in which case all is well), or the error message "Title Suppressed Due to Excessive Length". If this error message appears, you're going to want to provide an abbreviated title within the \titlerunning command, because if you won't do it, Springer will do it for you.

%N.B.: Author information (both in the \author{} and \authorrunning{} command) should only be present in the Camera-Ready Version of your paper. The version that you initially submit for review, ought to be double-blind. So, when initially submitting your paper, use:
\author{Eddie Conti$^1$ \corr, 	
Álvaro Parafita$^1$, Axel Brando$^1$}
%\author{Andr\'e Lauren Benjamin\inst{1} \and
%Calvin Cordozar Broadus Jr.\inst{2,3} \corr \and
%Antwan Andr\'e Patton\inst{1}\orcidID{0000-1111-2222-%3333}}
% You may leave out the orcidID information, if you want to.
% Use \corr to indicate the corresponding author. Note the spacing around the \corr command. Only one author can be the corresponding author.

%N.B.: comment out the \authorrunning{} command for the double-blind version of your paper submitted for review. Later, if your paper is accepted, use the command for the Camera-Ready Version.
\authorrunning{E. Conti et al.}
% First names are abbreviated in the running head.
% If there is one author, write 'A.L. Benjamin'.
% If there are two authors, write 'A.L. Benjamin and C.C. Broadus Jr.'
% If there are more than two authors, '[...] et al.' is used.
\institute{Barcelona Supercomputing Center, Barcelona, Spain \email{econti@bsc.es}
\email{parafita.alvaro@gmail.com}
\email{axelbrando@gmail.com}}
%\author{Andr\'e Lauren Benjamin\inst{1} \and
%Calvin Cordozar Broadus Jr.\inst{2,3} \corr \and
%Antwan Andr\'e Patton\inst{1}\orcidID{0000-1111-2222-%3333}}
% You may leave out the orcidID information, if you want to.
% Use \corr to indicate the corresponding author. Note the spacing around the \corr command. Only one author can be the corresponding author.

%N.B.: comment out the \authorrunning{} command for the double-blind version of your paper submitted for review. Later, if your paper is accepted, use the command for the Camera-Ready Version.
%\authorrunning{A.L. Benjamin et al.}
% First names are abbreviated in the running head.
% If there is one author, write 'A.L. Benjamin'.
% If there are two authors, write 'A.L. Benjamin and C.C. Broadus Jr.'
% If there are more than two authors, '[...] et al.' is used.

%\institute{Fictional Southern University, Savannah GA 31404, USA \email{\{a.l.benjamin,a.a.patton\}@fsu.fake}
%\and
%Fictional West Coast University, Long Beach CA 90840, USA \email{ccb@fwcu.fake}
%\and
%Secondary European Affiliation, Tiergartenstr. 17, 69121 Heidelberg, Germany
%\email{lncs@springer.com}}

\maketitle              

\begin{abstract}
Attribution methods (AMs) assign an importance score to each feature and are widely adopted to explain black-box models. However, most methods can produce variable attribution scores due to stochastic components in their definition. In this paper, we propose a distribution-based framework to capture the stability of attribution scores. In particular, our approach allows to understand the degree of separability in the ranked attribution vector and obtain the largest index for which a feature ranking remains reliable. We further extend this framework to compare AMs based on the robustness of their rankings across a dataset. Through experiments, we demonstrate how to apply our method to evaluate explainer stability. Overall, our approach provides a complementary criterion for evaluating the stability of AMs.

\keywords{Explainability \and Attribution \and Interpretability \and Stability \and Evaluation}
\end{abstract}

\section{Introduction} \label{sec:intro}
Explainable AI (XAI) aims to make the decision-making of models understandable to humans. In recent years, the adoption of complex architectures in high risk scenarios has raised concerns among researchers for its potential ethical and social implications, such as unfair treatment, bias amplification or discrimination \cite{fairness}, \cite{ethics_ai}. For these reasons, it is essential to understand how these models reason and generate predictions.

AMs or explainers \cite{intrinsic_posthoc} aim to detect which features are most relevant in generating the model output (feature attribution). In formal terms, we consider a model that maps input data to an output variable, $f: \mathcal{X} \to Y$, where $f(x)$ is the prediction for a given instance $x\in \mathcal{X} \subseteq \mathbb{R}^n$.
In general, $Y$ may represent either discrete classes or a continuous range of values;
in this work, we focus on the common binary classification setting, where $Y = \{0, 1\}$. Then, an AM is a function $\alpha_f\colon \mathbb{R}^n \to \mathbb{R}^n$, denoted $\alpha$ for simplicity, 
where for $\alpha(x)=(\alpha_1,\ldots,\alpha_n)$, $\alpha_i$ represents the importance of feature $x_i$ for the model’s decision process. In the literature we refer to them as local explanations, because they analyze a specific instance of the dataset. 

% \begin{figure}[t]
%     \centering
%     \includegraphics[width=\columnwidth]{figures/Framework.pdf}
%     \caption{Graphical illustration of our framework. Given an explainer $\alpha$ with a stochastic component, multiple runs yield different attribution values for each feature (here $\alpha_1,\alpha_2$) forming a distribution. The overlap between these distributions provides a measure of stability.}
%     \label{fig:framework}
% \end{figure}

Over the past decade, the XAI community has developed a wide range of AMs \cite{Molnar,methods_overview}, alongside several proposals for their evaluation \cite{kadir2023evaluation,agarwal2022openxai}. Despite this methodological richness, explanations suffer from various critical issues ranging from variability, inconsistency, and untruthfulness \cite{agree/disagree,ju2022logic}.

In this work we address a fundamental challenge in XAI: many AMs incorporate \textbf{stochastic components}—such as Monte Carlo sampling in SHAP \cite{SHAP} and LIME \cite{LIME}, or random perturbations in DiCE \cite{dice}—which can \textbf{yield explanations that vary significantly across runs}. This instability, acknowledged by several studies \cite{first_stability,Slack2019FoolingLA,dombrowski2019explanations}, raises concerns about reproducibility and reliability. As highlighted by Pawlicki \cite{pawlicki2023towards}, stability is a prerequisite for user trust and for the generalizability of insights derived from explanations.

In this work, we focus on explainers with stochastic components and introduce a novel distribution-based strategy to \textbf{quantify the stability of feature rankings}. In particular, we propose a metric that captures the maximum $k$ for which the top-$k$ ordering remains robust across multiple executions of an AM. From this proposal, we can measure a specific aspect of explanation stability, namely the degree of separability between feature importance values in the attribution vector. In particular, our contributions are:
\begin{itemize}
    \item We introduce a novel approach to quantify the stability of AM rankings based on distributional analysis. Our approach, mathematically grounded, allows to assess to what extent the ranking is significant.  
    \item We formally show---and validate empirically---how feature importance separation influences the overall stability of AMs. 
    \item We conducted experiments demonstrating the application of our metric to analyze and compare AMs across different models and datasets.

\end{itemize}

\section{Related Work}
The issue of explanation stability has been investigated deeply in the literature. A first set of approaches aligns with the taxonomy identified by Carvalho et al. \cite{carvalho2019machine} and Gawantka et al. \cite{Gawantka2024}, which define stability respectively as ``how similar the explanations are for similar instances'' (an idea also supported by Alvarez-Melis \cite{alvarez2018towards}) or ``high stability of an explanation is observed when the explanation undergoes minimal changes in response to minor variations in unimportant features of the data instance.'' Consistent with this definition, metrics have been developed that model perturbations at the input level. Alvarez-Melis et al. \cite{first_stability} formalize the first stability metric for local explanation methods, arguing that explanations should be robust to local perturbations of the input. 
% Accordingly, it is defined as:
% \begin{equation} \label{eq:first_stability}
%     S(x,\alpha) = \max_{x'\in U(x)} \frac{||\alpha(x)-\alpha(x')||}{||x-x'||},
% \end{equation}
% where $x'$ lies in a proper neighborhood $U(x)$ of $x$ and the model yields the same prediction (e.g. the same predicted class in the case of a classification problem) for both $x$ and $x'$.
Agarwal et al. \cite{rethinking_stability} deem that the approach of Alvarez-Melis et al. \cite{first_stability} does not leverage potentially meaningful information---such as the model’s internal representations---for evaluating stability, and implicitly assumes that $f$ behaves similarly on inputs $x$ and $x'$ that are close. As a consequence, authors propose several \textit{relative stability} metrics: Relative Input Stability, which measures the relative distance between explanations w.r.t the distance between inputs; Relative Representation Stability, which instead uses internal representations of $x$ and $x'$; and Relative Output Stability, which computes the relative distance replacing $x$ and $x'$ with their logits outputs. 
Similarly, Butt et al. \cite{stability_perturbation} focus on consistency in feature importance values across perturbed inputs of a given instance 
$x$: they generate a set of perturbations and aggregate the resulting variations in feature importance across these perturbations.

From a different perspective but still at input-level, Pawlicki \cite{pawlicki2023towards} study the stability of SHAP under three types of input perturbations: randomly shuffling values, randomly inserting the median, and adding Gaussian noise, to investigate how attribution values are affected. The author concludes that shuffling features has a more significant impact on SHAP stability, and that the stability of explanations varies across datasets, likely due to differences in complexity and characteristics---as also observed by Butt et al. \cite{stability_perturbation}.

A different approach \cite{mult_smoothing} proposes the concept of explanation stability accounting for perturbations directly on $\alpha$. Summarizing their idea, they consider $\alpha_i \in \{0,1\}$ (i.e., features are either relevant or not relevant), and stability means that the prediction does not change even if more explanatory features are added to $\alpha(x)$. To account for small perturbations, the authors consider to alter few entries of $\alpha$.
Hence the robustness of explanations is studied by analyzing whether small modifications in feature selection affect the model’s prediction. 

In this scenario, to the best of our knowledge, we introduce a novel approach that does not rely on perturbations or modifications of the explainer. Instead, we analyze the stability of an AM with a \textbf{distributional analysis}. Specifically, we estimate the distribution of each feature’s importance scores across runs and employ a \textbf{metric to quantify the overlap between these distributions}. A lower overlap indicates that the feature rankings are more distinct and, consequently, more reliable. This allows us to assess whether the resulting ranking is significant and to what extent, while also enabling comparison across AMs.

\section{Problem Definition}\label{sec:framework}
We now formalize our notion of stability. Specifically, we seek to answer the following:
\begin{center}
\textit{
\textbf{Research Question}: given an ordered vector of feature importances produced by an AM, to what extent can we trust the resulting ranking? More specifically, how confident can we be that the first feature is truly more important than the second, the second more important than the third, and so on?
}
\end{center}

To address this issue, we model each feature attribution as a distribution rather than a single deterministic value. Given the attribution vector $\alpha(x) = (\alpha_1(x), \dots, \alpha_n(x))$ and its associated dimension-wise distributions $p_1,\ldots,p_n$, we define the induced ranking $r(x)$ as the permutation of features sorted according to their expected attribution values:

\begin{equation}
\label{eq:mean_vector}
\tilde{\alpha}_i = \mathbb{E}_{\alpha_i \sim p_i}[\alpha_i]
\quad\Rightarrow\quad
r(x)=\mathrm{argsort}(\tilde{\alpha}(x)).
\end{equation}

The key question then becomes whether the ordering induced by $r(x)$ is reliable. In particular, we seek to quantify how distinguishable consecutive features in the ranking are and, consequently, how much confidence can be placed in their relative positions. For this purpose, we perform pairwise comparisons between the distributions $p_i$ and $p_j$ associated with each pair of features in $r(x)$. For every pair, we compute a distance $d(p_i,p_j)$, where $d$ is a metric taking values in a bounded interval (e.g., $[0,1]$) and measuring the degree of separation between the two distributions.

Our notion of attribution separability provides a flexible way to analyze a key component of stability from multiple perspectives. Within this general framework, we focus on two main aspects: (i) the \textbf{evolution of separability} across the attribution vector, and (ii) the notion of \textbf{$\boldsymbol{k}$-stability}, which identifies the largest prefix of top-ranked features whose ordering can be considered reliable. In Appendix \ref{sec:appenix_GB}, we show how the framework naturally \textbf{extends to binary explanations or group-based explanations} (i.e., $\alpha \in [0,1]^n$). 

Finally, we emphasize that the framework uses the \textbf{assumption} of local independence, that is, the features used by a model can be considered independent within a neighborhood around the sample to explain. This assumption is common in the AM literature: see \cite{SHAP,LIME,vstrumbelj2014explaining,samiei2021addressing,goldwasser2024statistical,leemann2023post}.\\

\subsection{The choice of distance metric}

Although the framework can use any metric, in this paper we use the following measure to quantify the distance between two distributions, introduced by Conti et al. \cite{conti2026cid}: 
\begin{equation} \label{eq:distance}
d(p,q) = 1 - \frac{\int_{\text{supp}(p) \cap \text{supp}(q)} \min(p(x),q(x)) dx}{\int_{\text{supp}(p) \cup \text{supp}(q)} \max(p(x),q(x)) dx}.
\end{equation}
This measure is the continuous generalization of the Jaccard distance, a metric widely adopted to measure the overlap between sets. We employ this $d$ since it is proved to be a metric, with values in $[0,1]$, and particularly suitable to capture distributional discrepancies.

To validate our framework we prove the following.
\begin{proposition}[Well-posedness in the ideal case] \label{prop:well-posedness}
Let $x\in\mathbb{R}^n$ be a point to be explained with ground-truth importances 
$f_1>\cdots>f_n$. Consider an AM $\alpha$ with a stochastic component
that produces, at each run $s=1,\ldots,m$, an attribution vector 
$\alpha^{(s)}=\{\alpha_1^{(s)},\ldots,\alpha_n^{(s)}\}$. 
Denote by $p_{i,m}$ the empirical probability density function associated with the values 
$\{\alpha_i^{(s)}\}_{s=1}^m$.  

Assume that, for each $i$, $p_{i,m}$ converges in distribution to the Dirac measure 
$\delta_{f_i}$ as $m \to \infty$. Then, for the distance metric $d(\cdot,\cdot)$ defined
in \eqref{eq:distance}, we have
\[
d(p_{i,m},p_{i+1,m}) \to 1 \quad \text{as } m \to \infty,
\quad \forall i \in \{1,\ldots,n-1\}.
\]
\end{proposition}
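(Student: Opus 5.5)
The plan is to exploit the strict gap between consecutive ground-truth importances. Fix $i\in\{1,\ldots,n-1\}$ and set $\eta=(f_i-f_{i+1})/3>0$, which is positive because $f_i>f_{i+1}$. Let $B_i=(f_i-\eta,f_i+\eta)$ and $B_{i+1}=(f_{i+1}-\eta,f_{i+1}+\eta)$; these intervals are disjoint. The idea is that, as $m\to\infty$, essentially all the mass of $p_{i,m}$ lies in $B_i$ and essentially all the mass of $p_{i+1,m}$ lies in $B_{i+1}$. The overlap in the numerator of \eqref{eq:distance} is therefore forced to vanish, while the denominator stays at least $1$.

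\textbf{Step 1 (concentration).} Write $P_{i,m}$ for the probability measure with density $p_{i,m}$. Convergence in distribution to $\delta_{f_i}$ gives, by the portmanteau theorem applied to the closed set $F=\mathbb{R}\setminus B_i$,
\[
\limsup_{m\to\infty} P_{i,m}(F)\le\delta_{f_i}(F)=0 .
\]
Hence $\varepsilon_{i,m}:=P_{i,m}(\mathbb{R}\setminus B_i)\to 0$, and likewise $\varepsilon_{i+1,m}:=P_{i+1,m}(\mathbb{R}\setminus B_{i+1})\to 0$.

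\textbf{Step 2 (bounding the numerator).} Restricting the integral to the intersection of supports only decreases it, so it suffices to bound $\int\min(p_{i,m},p_{i+1,m})$. Split $\mathbb{R}$ into $B_i$ and its complement, and use $\min(a,b)\le a$ and $\min(a,b)\le b$ as convenient:
\begin{itemize}
\item on $B_i$, bound $\min(p_{i,m},p_{i+1,m})\le p_{i+1,m}$; since $B_i\subseteq\mathbb{R}\setminus B_{i+1}$, this piece is at most $\varepsilon_{i+1,m}$;
\item on $\mathbb{R}\setminus B_i$, bound $\min(p_{i,m},p_{i+1,m})\le p_{i,m}$; this piece is at most $\varepsilon_{i,m}$.
\end{itemize}
Hence the numerator is at most $\varepsilon_{i,m}+\varepsilon_{i+1,m}$.

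\textbf{Step 3 (bounding the denominator).} Since $\max(p,q)\ge p$ and $p_{i,m}$ vanishes outside its support,
\[
\int_{\text{supp}(p)\cup\text{supp}(q)}\max(p,q)\,dx\ \ge\ \int p_{i,m}\,dx=1 .
\]

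\textbf{Step 4 (conclusion).} Combining Steps 2 and 3,
\[
0\le 1-d(p_{i,m},p_{i+1,m})\le \varepsilon_{i,m}+\varepsilon_{i+1,m}\to 0 .
\]
Therefore $d(p_{i,m},p_{i+1,m})\to 1$ for every $i$.

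\textbf{Main obstacle.} The delicate point is interpretive rather than computational: the meaning of ``empirical probability density function''. The raw empirical measure is atomic and has no Lebesgue density. I would handle this by noting that Steps 2 and 3 only use that $p_{i,m}$ and $p_{i+1,m}$ are densities of probability measures with respect to some common dominating measure $\mu$. Two natural choices are:
\begin{itemize}
\item Lebesgue measure, for kernel or histogram estimates;
\item counting measure on the finite set of observed values, for the raw empirical distribution.
\end{itemize}
Both the min/max integrals and the portmanteau step remain valid in either case, so the argument goes through regardless of which convention is adopted. The only hypothesis genuinely needed is the stated weak convergence of each $P_{i,m}$ to $\delta_{f_i}$.
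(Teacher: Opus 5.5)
Your proof is correct, and it follows a more direct and more rigorous route than the paper's.

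The paper argues in two lines. The limits $\delta_{f_i}$ and $\delta_{f_{i+1}}$ are mutually singular, so ``$d(\delta_{f_i},\delta_{f_{i+1}})=1$''; then ``by continuity of $d$'' the claim follows. That continuity is never established, and it fails for convergence in distribution in general. Take $p_m,q_m$ to be the uniform densities on $[0,1/m]$ and $[1/m,2/m]$. Both pairs $(p_m,p_m)$ and $(p_m,q_m)$ converge weakly to $(\delta_0,\delta_0)$, yet $d$ is identically $0$ along the first and $1$ along the second. Moreover, the defining formula with Lebesgue densities does not even apply to Dirac masses.

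You bypass this entirely. The portmanteau theorem on the closed complements of two disjoint neighbourhoods gives vanishing tail masses $\varepsilon_{i,m},\varepsilon_{i+1,m}$. The overlap integral is bounded by their sum, and the denominator is at least $1$. In effect you prove exactly the one-sided continuity the paper invokes, in the regime where it actually holds: limits that are point masses at distinct locations. Equivalently, for probability densities $\int\min(p,q)=1-\mathrm{TV}$ and $\int\max(p,q)=1+\mathrm{TV}$, so $d=2\,\mathrm{TV}/(1+\mathrm{TV})$. Your Steps 2--3 thus amount to showing $\mathrm{TV}(P_{i,m},P_{i+1,m})\to 1$.

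Your dominating-measure remark is also a real gain. Since $\int\min(p,q)\,d\mu$ and $\int\max(p,q)\,d\mu$ do not depend on the choice of $\mu$, the result covers KDE or histogram estimates as well as the raw atomic empirical measure. The paper leaves that ambiguity unaddressed. The paper's version buys brevity at the cost of an unjustified continuity claim; yours is self-contained.
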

This result, proved in Appendix \ref{sec:theoretical_appendix}, confirms that the metric is well-posed in the sense that, if an AM were perfectly faithful and its stochasticity vanished asymptotically, then it would achieve the maximum possible stability.

\subsection{$k$-Stability}

While the evolution of separability can be directly analyzed by tracking how the values of $d(p_i, p_j)$ change across the attribution vector, the notion of $k$-stability requires a formal definition. In many applications, an AM is used to explain a prediction by highlighting the most relevant features; thus, we are particularly interested in assessing how reliably the top portion of the ranking can be trusted. This motivates the following definition:
\begin{definition} \label{def:kstable}
Given a threshold $l \in [0,1]$ and an instance $x$, we say that the attribution method $\alpha(x)$ is \emph{$k$-stable}, where
\[
k := \argmax_{s \in [n]}\,\, \text{such that } d(p_i, p_j) \ge l 
\quad \forall\, i,j \in \mathcal{A}_s(r(x)),
\]
and $\mathcal{A}_s(r(x))$ denotes the set of the top $s$ indices in the ranked list $r(x)$.
\end{definition}
In simple terms, $k$ represents the largest prefix of the ranking $r(x)$ for which all consecutively-ranked pairs $(p_i, p_j)$ have a distance $d(p_i, p_j)$ exceeding the threshold $l$. This formulation allows us to assess the degree of separability among the top-ranked features in the attribution ranking. While it is possible that features lower in the ranking (i.e., the "tail") may also be well-separated in terms of $d$, the focus is to ensure that the top positions are genuinely distinguishable because we expect those features to be more relevant.

% To illustrate the computation of the separability of attribution scores, in \autoref{fig:kde_plot} we compute the value of LIME for the first entry of the test set of the Diabetes Dataset (\cite{diabetes_dataset}). We run the LIME explainer $50$ times for that instance and collect the attribution scores for each feature. If we now follow \autoref{def:kstable} for $\tilde{\alpha}_i$, we can say that at level $0.9$ LIME is \textit{5-stable}. Indeed, as it can be seen in the plot, the tail variables Insulin, SkinThickness and BloodPressure share a high overlap with $d$-values $0.23$ and $0.19$ for their ordered pairwise comparison. We conclude that for this instance, the first $5$ feature importance scores are stable w.r.t. the LIME explainer, in the sense that employing the $d$ metric these features are separable with level $0.9$, while the last 3---as it can be seen in the plot---are almost undistinguishable.

% \begin{figure}[t]
%     \centering
%     \includegraphics[scale=0.2]{figures/kde_all_together.pdf}
%     \caption{LIME values for the features of the first test instance of Diabetes dataset. The explainer was run $50$ times and we employed Gaussian Estimation for the distributions. It can be observed that the rightmost five pairs of distributions are effectively separate, while the leftmost three graphs are practically indistinguishable.}
%     \label{fig:kde_plot}
% \end{figure}

The notion of \textit{k-stability} induces a total order over the set of AMs for the instance $x$, $\mathbb{A}_{x} : = \{\alpha(x) : \text{$\alpha$ is an AM} \}$.
\begin{definition} \label{def:comparison}
Let $\alpha_1(x), \alpha_2(x) \in \mathbb{A}_{x}$ be two attribution maps for the same instance $x$, with corresponding stability levels $k_1$ and $k_2$. Given a fixed threshold $l \in [0,1]$, we say that $\alpha_1(x)$ is more stable that $\alpha_2(x)$ if $k_1\geq k_2$.
\end{definition}
Combining \autoref{def:kstable} and \autoref{def:comparison}, two AMs are compared by the proportion of instances in $\mathcal{X}$ for which one exhibits higher k-stability than the other; the AM that prevails on the majority of instances is considered more stable on average.

\section{Validation of the proposed stability metric} \label{sec:validation}
Before turning to the experiments, we need to clarify that the adopted strategy is directly aligned with the research question we aim to address. In particular, we want to show that the proposed metric---and therefore the resulting k-stability values---are sensitive to the degree of separability between feature importance values, which in turn reflects the reliability of the induced ranking.

To support this intuition, for this demonstration, in Appendix \ref{sec:theoretical_appendix} we consider a simplified setting where attribution scores are assumed to follow Gaussian distributions centered at their ground-truth values. Under this assumption, the distance between two distributions can be expressed as:
\[
d(p,q)
= 1 - \frac{2(1 - \Phi(z))}{2\Phi(z)}
= \frac{2\Phi(z) - 1}{\Phi(z)},
\]
where $\Phi$ is the cumulative distribution function of the standard Gaussian and $z = \frac{\Delta}{2\sigma_i}$, with $\Delta$ denoting the difference between the ground-truth attribution values of two features and $\sigma_i$ their shared standard deviation. From this expression, it follows that when $\Delta \to 0$, we obtain $d(p,q) \to 0$, while in the limit $\Delta \to \infty$, we have $d(p,q) \to 1$. 

This shows that the proposed distance is intrinsically sensitive to the separation between feature importances: the more distinguishable the ground-truth attributions are, the higher the induced distance. This supports the interpretation that attribution separability is a fundamental component of stability, and justifies its role in the definition of k-stability.

This theoretical result is supported by empirical evidence. As detailed in the supplementary material (\ref{sec:sythetic_empirical_validation}), we construct synthetic datasets with known ground-truth feature importances to evaluate the behavior of the metric in a controlled setting. \autoref{fig:dataset_complexity} shows that, when keeping the range of importance values fixed ($[0,0.4]$) and increasing the number of features, feature importances naturally begin to overlap, increasing their stability, with our metric capturing this phenomenon: we observe a clear downward trend in $k$-stability, indicating that the metric is directly affected by the reduced separability.

\begin{figure}[!h]
     \centering
     \includegraphics[scale=0.22]{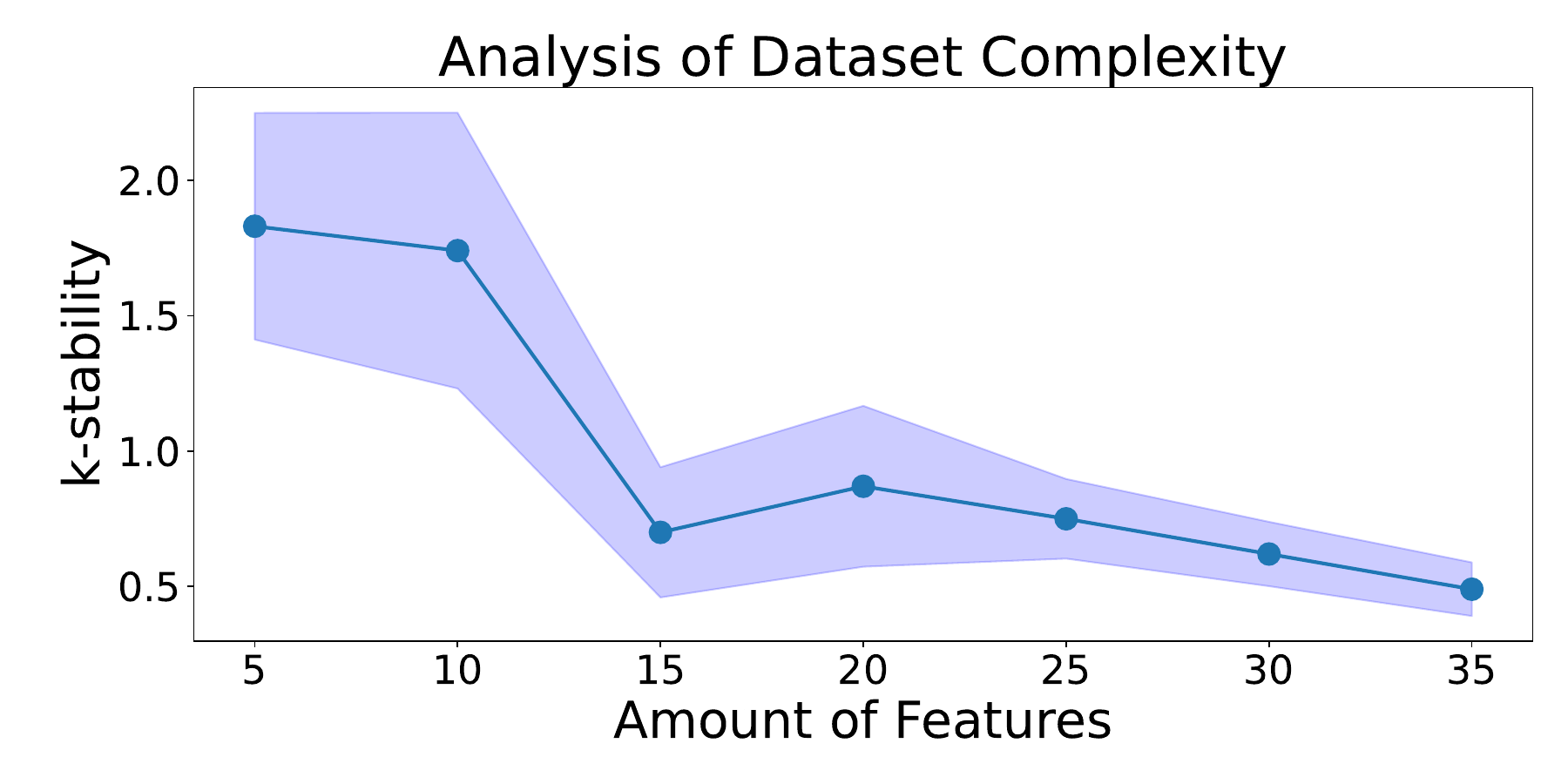}
     \caption{When feature importance values fall in a certain range (here $[0,0.4]$), and we increase the number of features, the k-stability value naturally decreases.}
     \label{fig:dataset_complexity}
 \end{figure}

To further confirm this behavior, we also decrease separability by sampling ground-truth importances from progressively narrower intervals ($[0.4,1]$, $[0.6,1]$, and $[0.8,1]$) for $5$ and $10$ features. As expected, smaller intervals lead to lower separability, which in turn decreases $k$-stability. See the supplementary material for these results. Overall, these controlled experiments provide evidence that the proposed metric responds coherently to changes in feature importance separability, supporting its adequacy for the research question.

% Taken together, these findings highlight attribution separability as one concrete source of instability, complementing existing studies \cite{agree/disagree,pawlicki2023towards,ribeiro2021does,dai2022fairness,velmurugan2021evaluating} that attribute instability to factors such as model complexity, dataset structure, or explainer stochasticity. Our results, aligned with this literature, suggest that instability does not originate from a single cause but rather from multiple, interacting components. This perspective underscores the need to interpret \textbf{instability as a multidimensional phenomenon} shaped by the dataset, the model and the choice of AM.

% Furthermore, this perspective resonates with established taxonomies in uncertainty quantification \cite{hullermeier2021aleatoric}, which distinguish between aleatoric (irreducible noise), epistemic (model- or data-based), and domain (e.g., out-of-distribution or adversarial) uncertainty. Drawing this parallel highlights the inherent connections between both fields, for which reason we advocate for future work to focus on disentangling the different sources of instability and exploring its connections to Uncertainty Quantification as a promising research direction.

% Change to Experiment y separar en parrafos model, parameters, etc... y hacer experiment list. Other experiments con todo los iperparametros y analisis
\section{Experiments} \label{sec:experiments}
In this section, we empirically evaluate the proposed framework introduced in Section \ref{sec:framework} and demonstrate the applicability of the stability metric for analyzing AMs. Experimental code will be made publicly available upon acceptance.

\textbf{Dataset.} We use four benchmark datasets: Diabetes \cite{diabetes_dataset}, Heart Disease \cite{heart_disease}, Mobile \cite{Mobile_price}, and Churn \cite{churn_dataset}.

\textbf{Methods.} We consider four explanation methods: \textbf{LIME}, \textbf{SHAP}, \textbf{DiCE}, and a random baseline (\textbf{RAND}) used for calibration. Given the connection between LIME and SHAP \cite{Molnar}, we ensure consistency by using the same background dataset and the same \verb|num_samples| parameter. Across all experiments, we evaluate 50 instances from each test set and run each explainer 50 times per data point. 

\textbf{Stability Estimation.} Our approach relies on estimating the distributions $p_1, \ldots, p_n$ of attribution scores. In practice, we approximate each distribution from the previous 50 explainer runs using Gaussian Kernel Density Estimation (KDE) with Silverman’s rule for bandwidth selection. Based on these estimates, we assess whether the ranking induced by $\tilde{\alpha}(x)$ (see Equation \ref{eq:mean_vector}) reflects meaningful distributional differences leveraging the metric $d$.

We design the following experiments to assess the applicability of our metric:

\begin{itemize}
    \item Section \ref{sec:quantify_stability}: we compute the stability of attribution vectors for the four explainer methods on a Random Forest (RF) model.
    \item Section \ref{sec:comparison}: we use $k$-stability to compare the robustness of different explainers across datasets and models (RF, Logistic Regression (LG), Decision Tree (DT), and Support Vector Machine (SVM)).
    \item Section \ref{sec:impact_runs}: we study the effect of the number of runs on stability estimation.
    \item Section \ref{sec:more_experiments}: we report additional experiments left to the appendices.
\end{itemize}

% todo: quitar
% \begin{table}[!h]
%     \centering
% \caption{Summary of the datasets used in the experiments. Preprocessing was performed on each one for analysis purposes.}
%     \label{tab:dataset_summary}
%     \begin{tabular}{lcc}
%     \toprule
%     Dataset & \# of Samples &  \# of Features\footnotemark \\
%     \midrule
%     Churn & 440832\footnotemark & 8 \\
%     Diabetes & 768 & 8 \\
%     Heart Disease & 303 & 13 \\
%     Mobile & 2000 & 20 \\
%     %Employee & 4653 & 7 \\
%     \bottomrule
% \end{tabular}  
% \end{table}
% \footnotetext[2]{Each dataset was preprocessed by removing unnecessary features and converting categorical variables into numerical form.}
% \footnotetext[3]{The amount of used samples was reduced to 1000, still maintaining the accuracy of the model, for computation reasons.}

\subsection{Analysis of Separability in the Attribution Vector}\label{sec:quantify_stability}

In this section, we analyze how separability evolves across the attribution vector. The first experiment computes the values of $d_1$ for consecutive feature pairs in the ranked attribution vector, across all four datasets. More precisely, given the attribution vector sorted in decreasing order of importance 
$(a_1, a_2, \dots, a_m)$, we compute $d_1(a_i, a_{i+1})$ for all $i = 1, \dots, m-1$. The resulting trends, shown in \autoref{fig:prog_analysis}, summarize how well the attribution distributions are separated at different positions in the ranking.

For the first two datasets, SHAP produces a constant value of $1$, reflecting the fact that its attributions are near-deterministic (meaning, due to the large sample size, the attribution values are identical across runs, up to decimal precision) and therefore maximally separable. In the more complex Heart and Mobile datasets, SHAP still exhibits higher separability overall, but its curve peaks toward the tail, suggesting increased overlap among the less relevant features. As expected, all attribution methods outperform the RAND baseline.
These results show how the framework allows a detailed examination of separability, highlighting differences across AMs and datasets.

\begin{figure}[!h]
    \centering
    \includegraphics[scale=0.2]{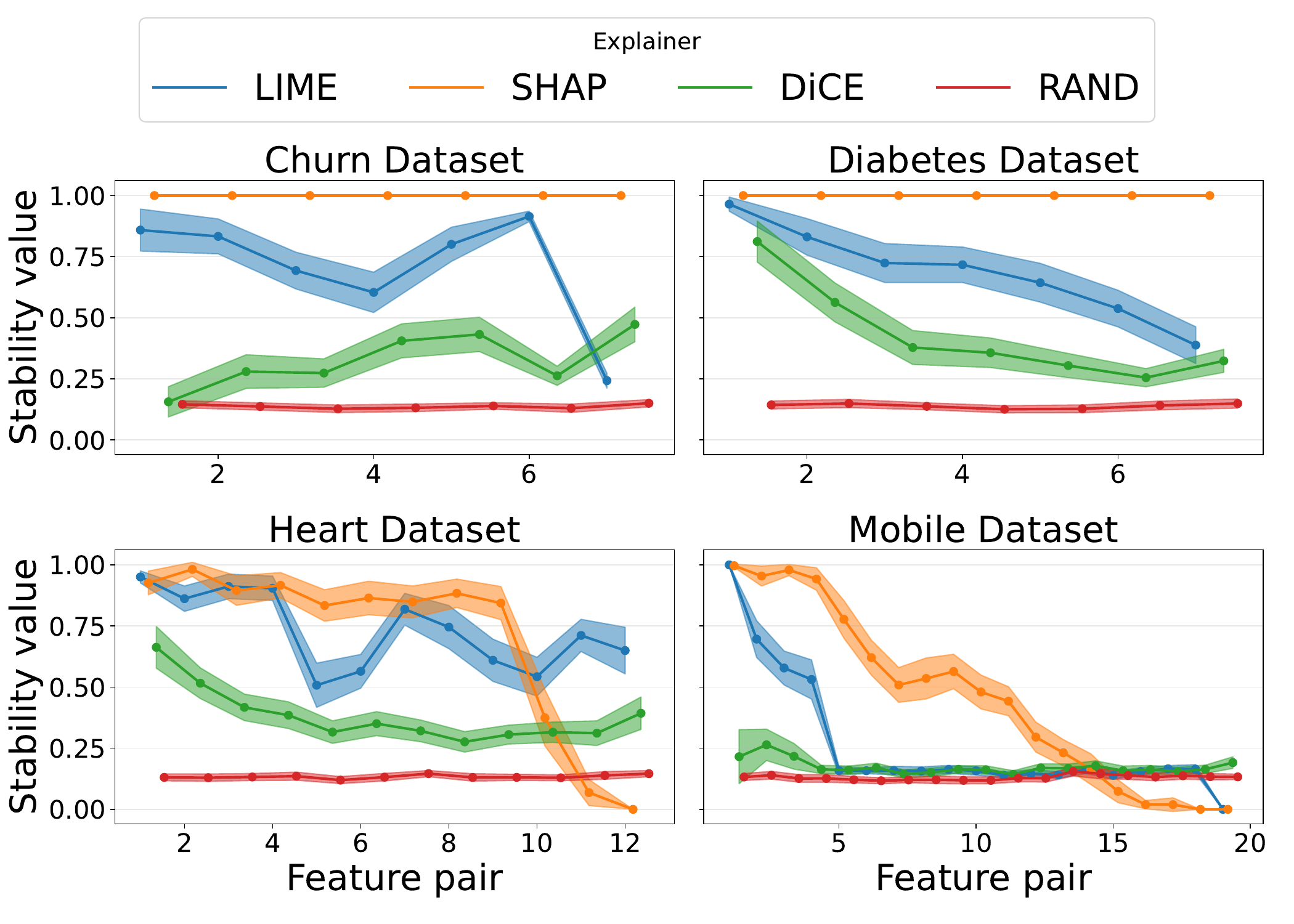}
    \caption{The stability values for the analyzed datasets calculated on the ordered feature pairs with the confidence interval $\pm 2\sigma/\sqrt{n}$, where $n=50$ is the number of instances. Here the x-axis refers to the feature pair $(\alpha_i, \alpha_{i+1})$. In the case of Churn and Diabetes, SHAP is achieves constantly the value of $1$.}
    \label{fig:prog_analysis}
\end{figure}

We extend the separability analysis by computing the pairwise distance between all features and averaging the resulting heatmaps over $50$ instances. Overall, \autoref{fig:heatmap_analysis} highlights that separability tends to increase with the positional distance between features, and the tail shows less separability than the more important variables. This graph provides an overview of the separability across all features, including those that are far apart in the ranking, thereby offering a more detailed view of the structure of the attribution vector.

\begin{figure}[!h]
    \centering

    \begin{subfigure}{0.48\textwidth}
        \centering
        \includegraphics[width=\linewidth]{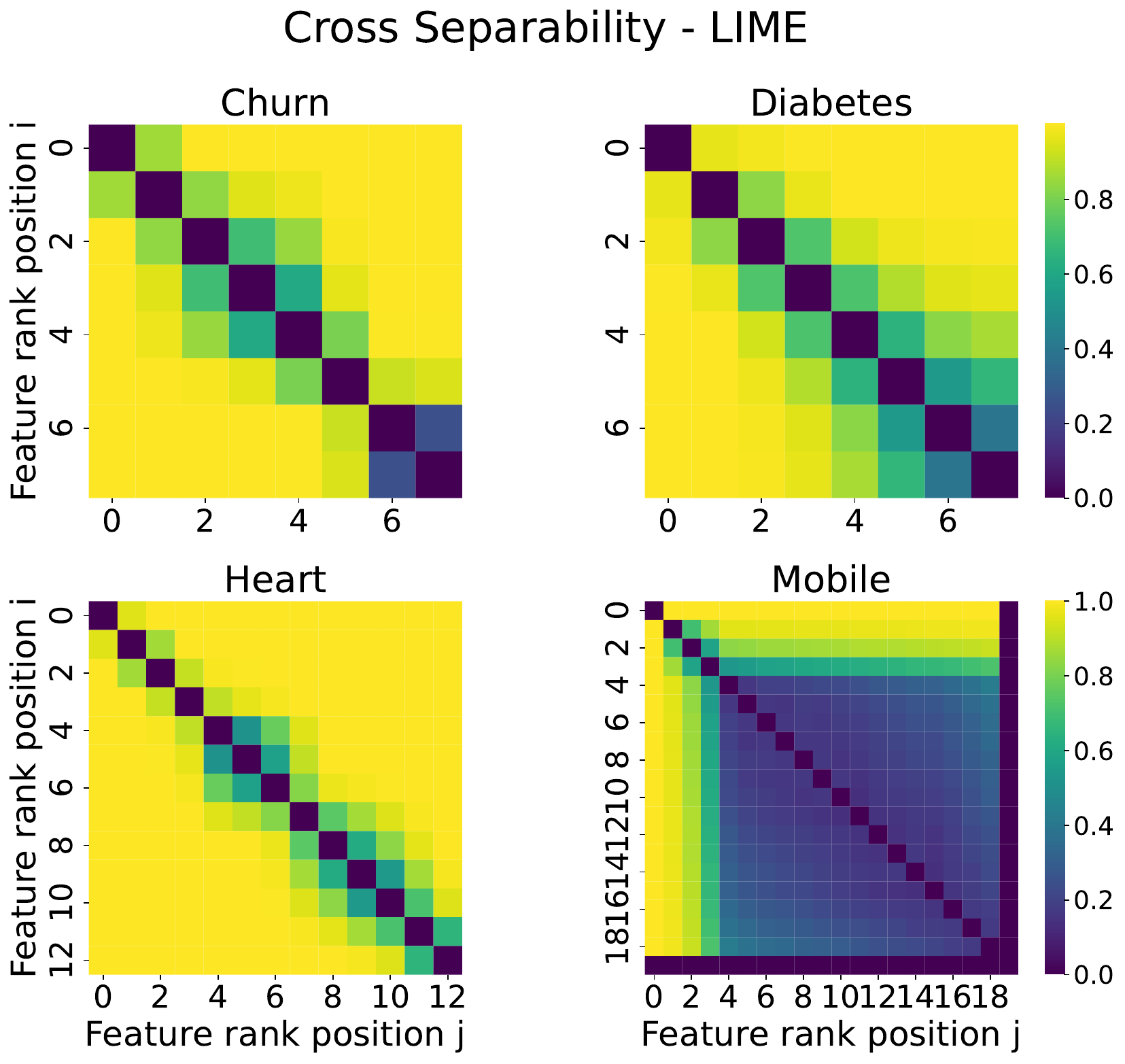}
    \end{subfigure}
    \hfill
    \begin{subfigure}{0.48\textwidth}
        \centering
        \includegraphics[width=\linewidth]{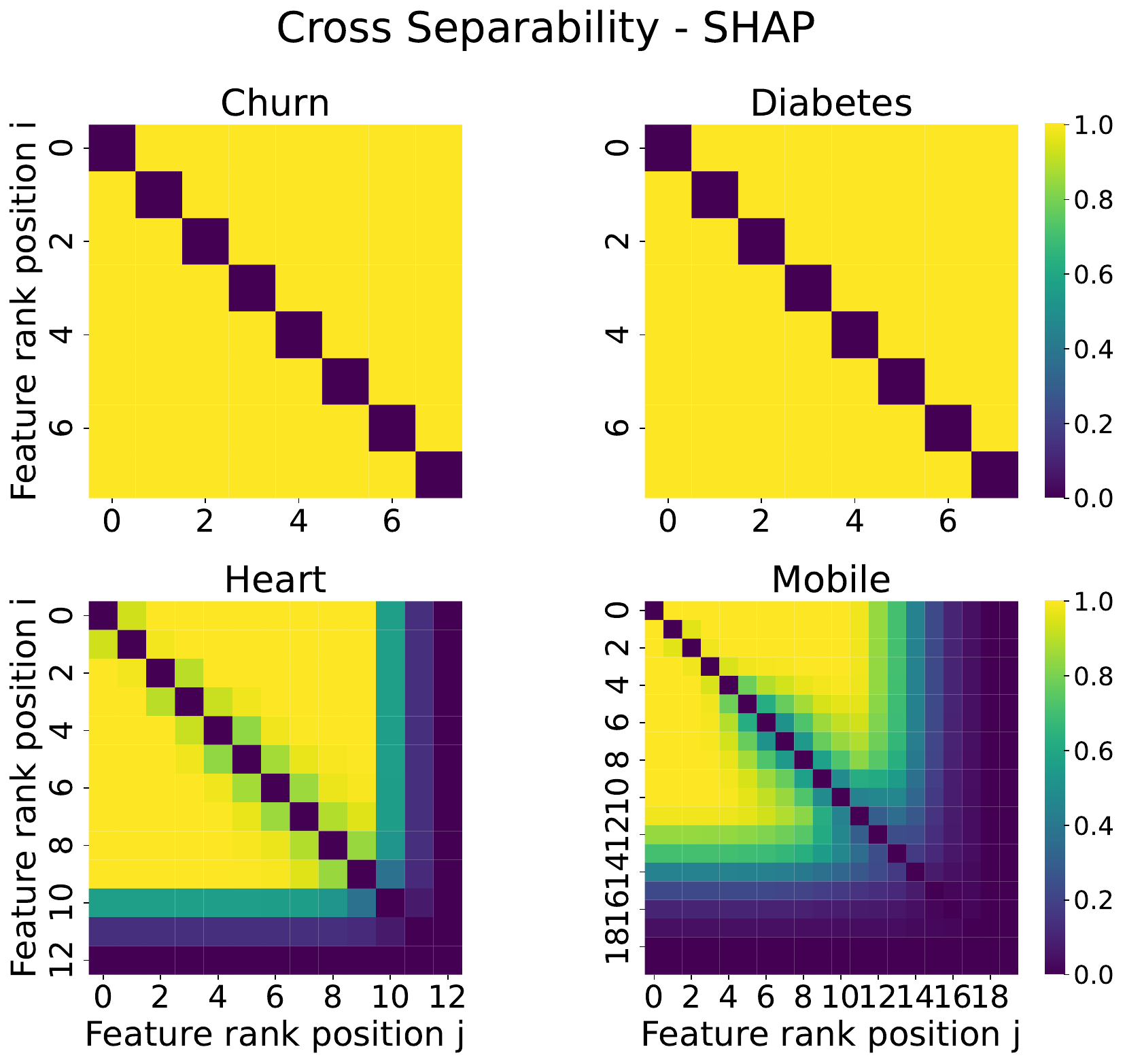}
    \end{subfigure}

    \vspace{0.3cm}

    \begin{subfigure}{0.48\textwidth}
        \centering
        \includegraphics[width=\linewidth]{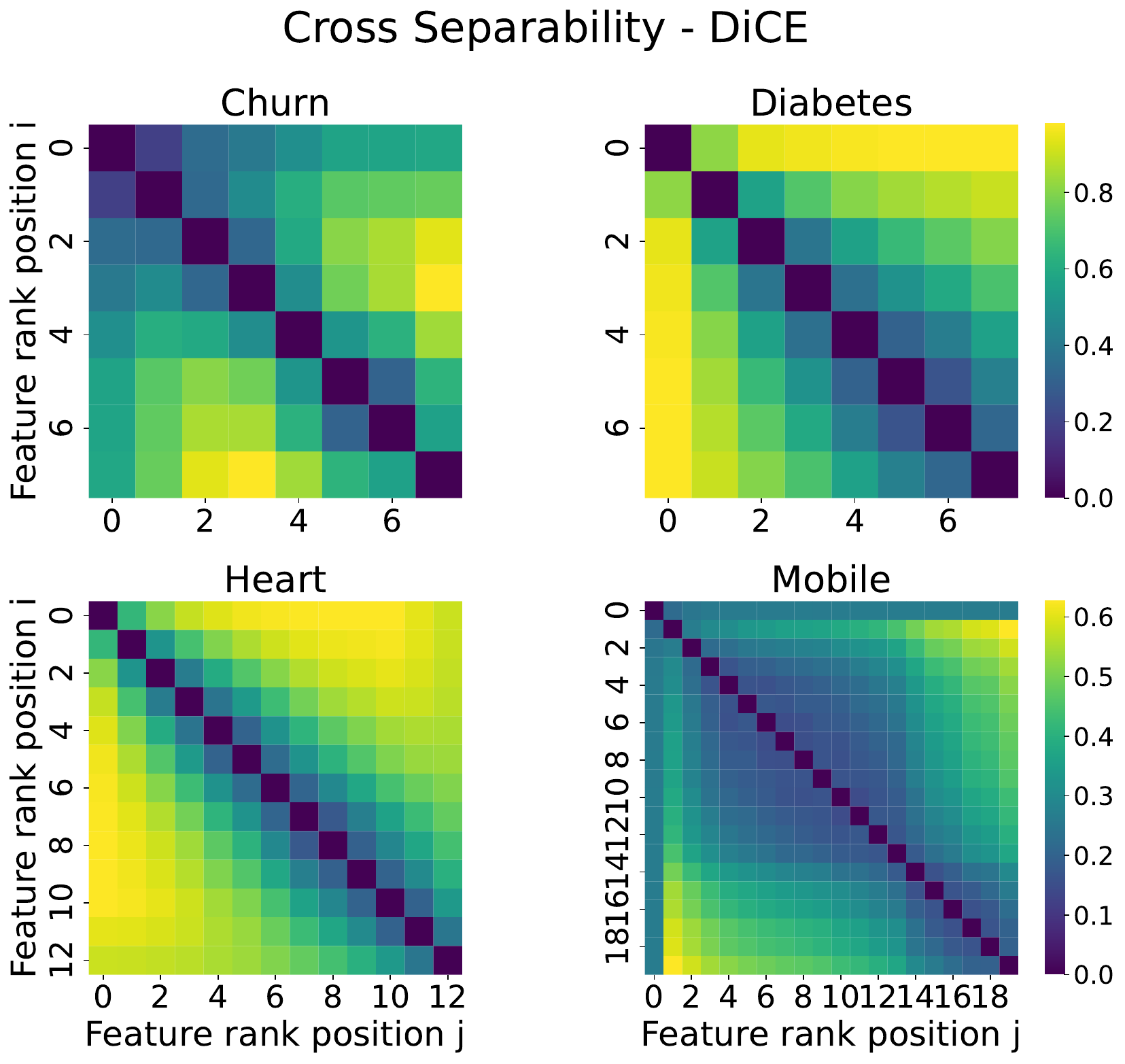}
    \end{subfigure}
    \hfill
    \begin{subfigure}{0.48\textwidth}
        \centering
        \includegraphics[width=\linewidth]{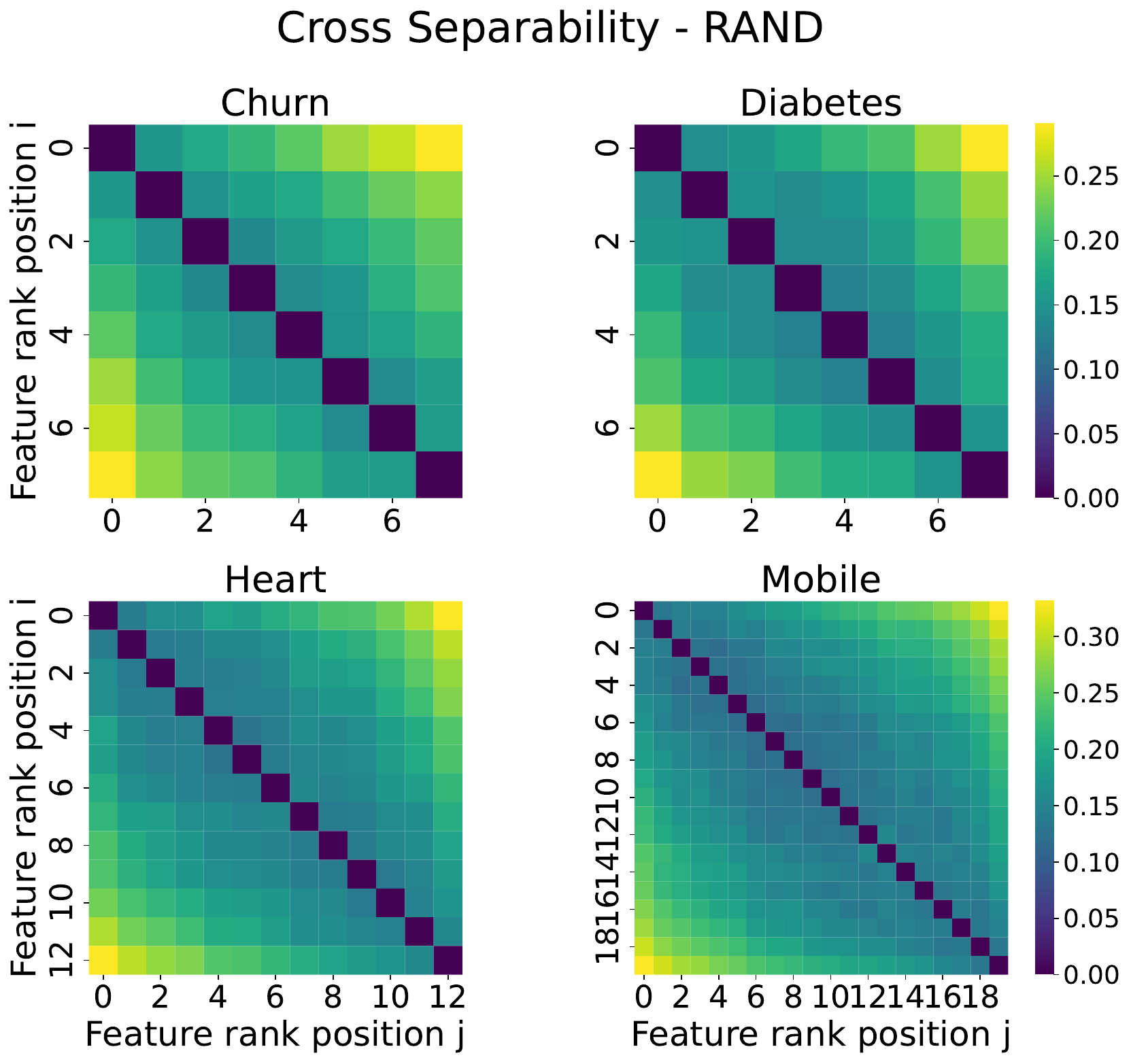}
    \end{subfigure}

    \caption{Average heatmap for the 50 points analyzed with different explanation methods. Each entry represents the symmetric distance $d(p_i,p_j)$. From left to right: LIME, SHAP; DiCE and RAND.}
    
    \label{fig:heatmap_analysis}
\end{figure}

\subsection{Comparing k-stability values of AMs} \label{sec:comparison}
Following \autoref{def:comparison}, we compare the four AMs across all datasets. In \autoref{tab:comparison_all} and \autoref{tab:comparison_all_svm} we report the results for the RF and SVM models (LG and DT left to the appendix). Both tables are consistent: first, we note that the last column being consistently equal to $1$ confirms that the RAND explainer serves as a reliable baseline, thereby validating the calibration of our analysis. Second, SHAP generally outperforms the other methods in the datasets under analysis, achieving higher proportions of larger $k$-stability values than LIME, DiCE, and RAND. LIME also performs better than DiCE and RAND. In contrast, DiCE consistently shows lower stability, reaching values close to the random baseline, particularly in the Mobile dataset.

Overall, our findings are consistent with previous studies reporting that SHAP tends to be more stable than LIME \cite{stability_perturbation,nayebi2023empirical}. However, according to the No Free Lunch Theorem for explainability, no AM can be expected to consistently outperform all others across datasets and models. For this reason, our goal is not to establish the superiority of a particular method, but rather to exemplify the application of a well-defined framework to quantitatively assess a specific property: the reliability of the feature ranking induced by its attribution scores.

\begin{table*}[!h]
    \centering
           \caption{Pairwise comparison of $k$-stability values across datasets for the RF model. 
    For each dataset we report the proportion of instances 
    for which the row method Lime (L), SHAP (S), DiCE (D) and RAND (R) achieves a $k$-stability greater than or equal to the column method.}
    \begin{tabular}{l|cccc|cccc|cccc|cccc}
        \toprule
        & \multicolumn{4}{c|}{\textbf{Churn}} 
        & \multicolumn{4}{c|}{\textbf{Diabetes}} 
        & \multicolumn{4}{c|}{\textbf{Heart}} 
        & \multicolumn{4}{c}{\textbf{Mobile}} \\
        \cmidrule(lr){2-5} \cmidrule(lr){6-9} \cmidrule(lr){10-13} \cmidrule(lr){14-17}
        & L & S & D & R 
        & L & S & D & R
        & L & S & D & R
        & L & S & D & R \\
        \midrule
        L & 1.00 & 0.00 & 0.98 & 1.00 
             & 1.00 & 0.00 & 0.96 & 1.00
             & 1.00 & 0.36 & 0.88 & 1.00
             & 1.00 & 0.16 & 1.00 & 1.00 \\
        S & \textbf{1.00} & \textbf{1.00} & \textbf{1.00} & \textbf{1.00} 
             & \textbf{1.00} & \textbf{1.00} & \textbf{1.00} & \textbf{1.00}
             & \textbf{0.74} & \textbf{1.00} & \textbf{0.98} & \textbf{1.00}
             & \textbf{0.96} & \textbf{1.00} & \textbf{1.00} & \textbf{1.00} \\
        D & 0.20 & 0.00 & 1.00 & 1.00 
             & 0.30 & 0.00 & 1.00 & 1.00
             & 0.32 & 0.16 & 1.00 & 1.00
             & 0.18 & 0.04 & 1.00 & 1.00 \\
        R & 0.18 & 0.00 & 0.96 & 1.00 
             & 0.10 & 0.00 & 0.34 & 1.00
             & 0.20 & 0.16 & 0.64 & 1.00
             & 0.00 & 0.02 & 0.82 & 1.00 \\
        \bottomrule
    \end{tabular}

    \label{tab:comparison_all}

\end{table*}

\begin{table*}[!h]
    \centering
           \caption{Pairwise comparison of $k$-stability values across datasets for the SVM model.}
    \begin{tabular}{l|cccc|cccc|cccc|cccc}
        \toprule
        & \multicolumn{4}{c|}{\textbf{Churn}} 
        & \multicolumn{4}{c|}{\textbf{Diabetes}} 
        & \multicolumn{4}{c|}{\textbf{Heart}} 
        & \multicolumn{4}{c}{\textbf{Mobile}} \\
        \cmidrule(lr){2-5} \cmidrule(lr){6-9} \cmidrule(lr){10-13} \cmidrule(lr){14-17}
        & L & S & D & R 
        & L & S & D & R
        & L & S & D & R
        & L & S & D & R \\
        \midrule
        L & 1.00 & 0.00 & 0.97 & 1.00 
             & 1.00 & 0.00 & 1.00 & 1.00
             & 1.00 & 0.30 & 0.83 & 1.00
             & 1.00 & 0.17 & 1.00 & 1.00 \\
        S & \textbf{1.00} & \textbf{1.00} & \textbf{1.00} & \textbf{1.00} 
             & \textbf{1.00} & \textbf{1.00} & \textbf{1.00} & \textbf{1.00}
             & \textbf{0.80} & \textbf{1.00} & \textbf{0.97} & \textbf{1.00}
             & \textbf{0.97} & \textbf{1.00} & \textbf{1.00} & \textbf{1.00} \\
        D & 0.17 & 0.00 & 1.00 & 1.00 
             & 0.23 & 0.00 & 1.00 & 1.00
             & 0.37 & 0.13 & 1.00 & 1.00
             & 0.17 & 0.03 & 1.00 & 1.00 \\
        R & 0.17 & 0.00 & 0.60 & 1.00 
             & 0.13 & 0.00 & 0.63 & 1.00
             & 0.23 & 0.07 & 0.63 & 1.00
             & 0.00 & 0.02 & 0.73 & 1.00 \\
        \bottomrule
    \end{tabular}

    \label{tab:comparison_all_svm}

\end{table*}

\subsection{The impact of the number of runs}\label{sec:impact_runs}

Finally, we analyze how the distribution of $d_1$ for the ordered feature pairs evolves when varying the number of runs of the method. In \autoref{fig:run_analysis}, we report the stability values along with confidence intervals $\pm 2\sigma/\sqrt{n}$. The results indicate that the proposed metric exhibits a \textbf{limited sensitivity to the number of runs} for LIME, SHAP and DiCE, suggesting that reliable estimates can be obtained without excessive computational cost. A mild deviation is observed in the Mobile dataset when using only 5 runs, where the distances between feature distributions appear overestimated. Only in the case of RAND more runs are necessary, but this is due to the random behavior of the explainer

\begin{figure}[!h]
    \centering

    \begin{subfigure}{0.48\textwidth}
        \centering
        \includegraphics[width=\linewidth]{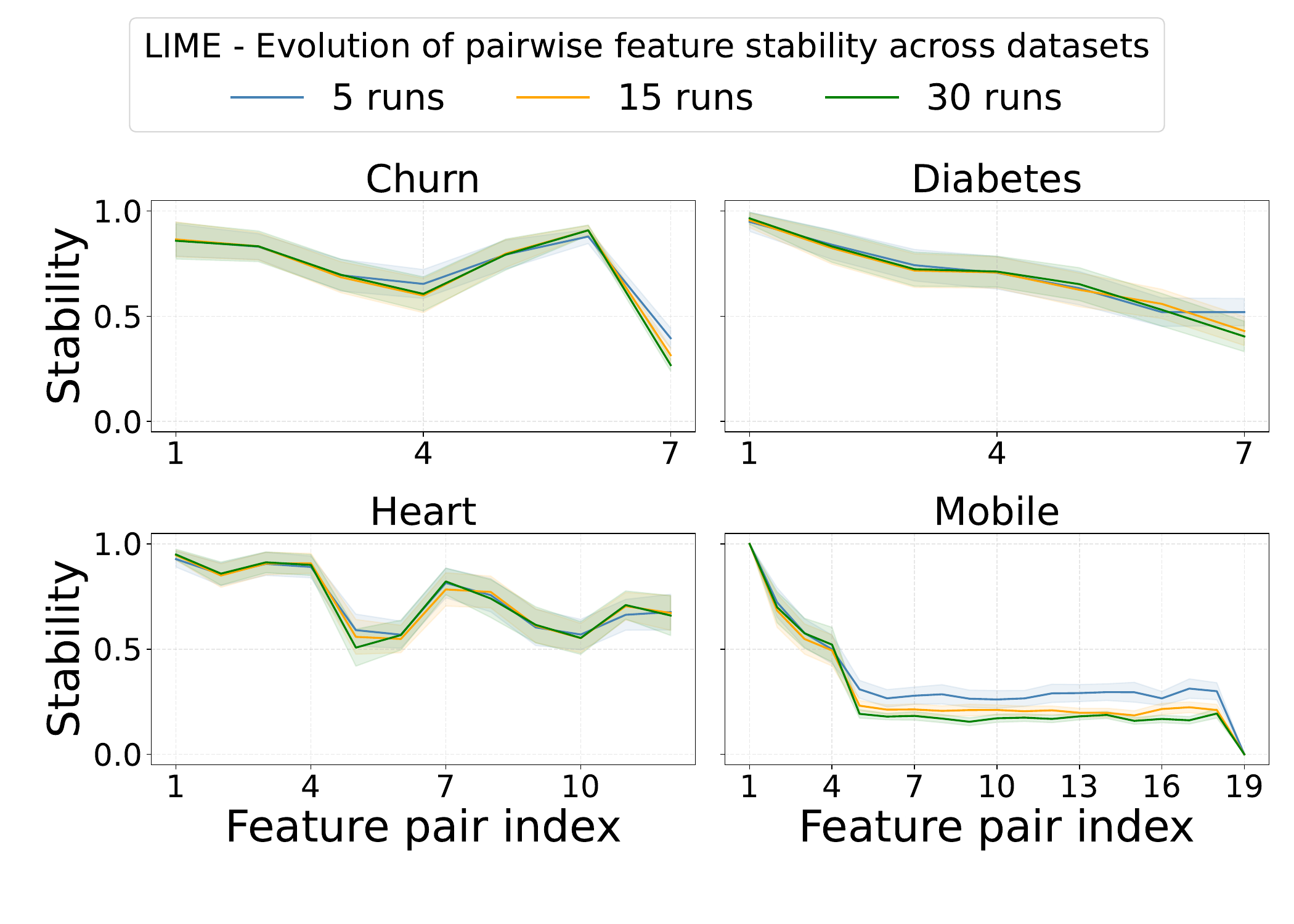}
    \end{subfigure}
    \hfill
    \begin{subfigure}{0.48\textwidth}
        \centering
        \includegraphics[width=\linewidth]{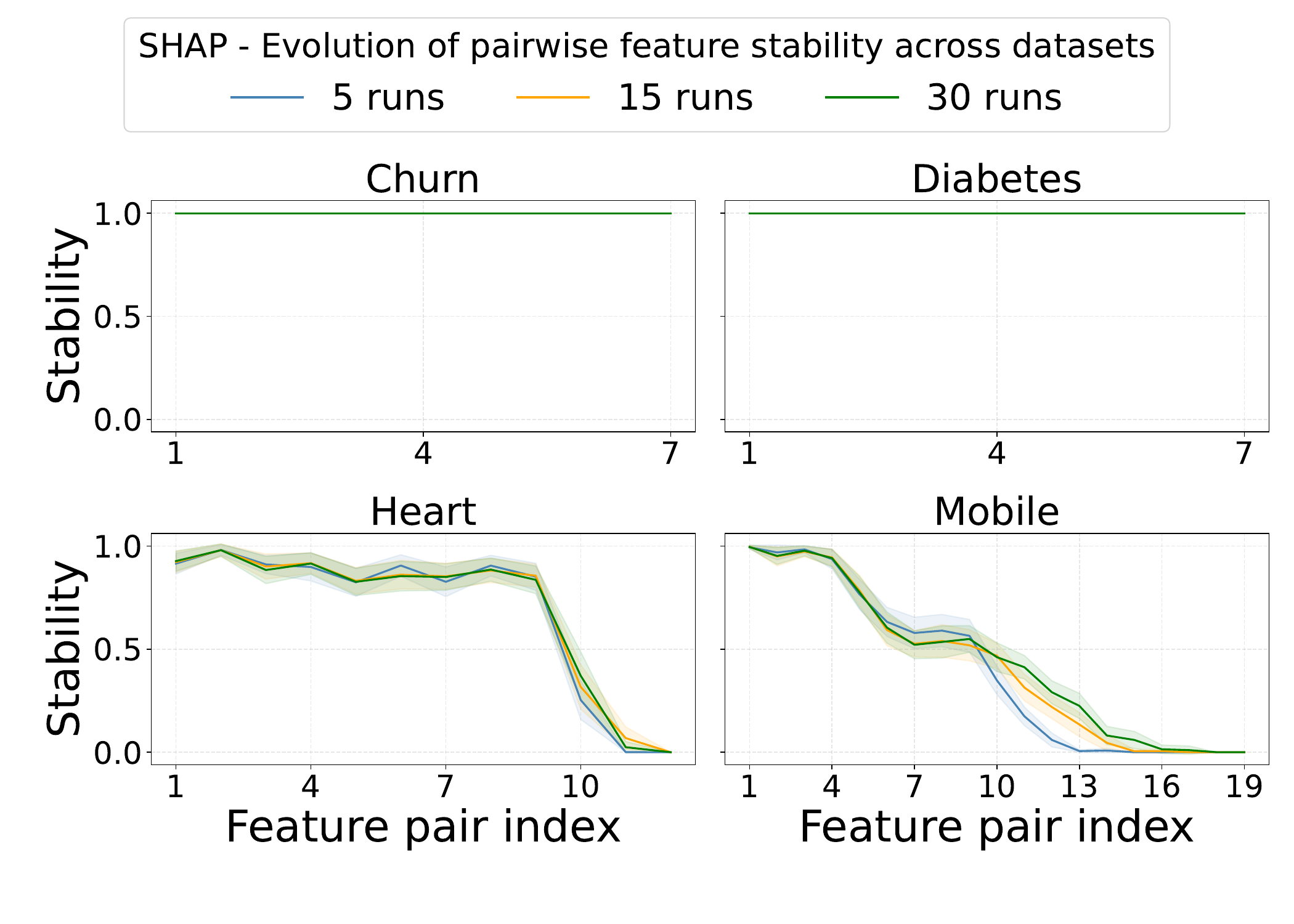}
    \end{subfigure}

    \vspace{0.3cm}

    \begin{subfigure}{0.48\textwidth}
        \centering
        \includegraphics[width=\linewidth]{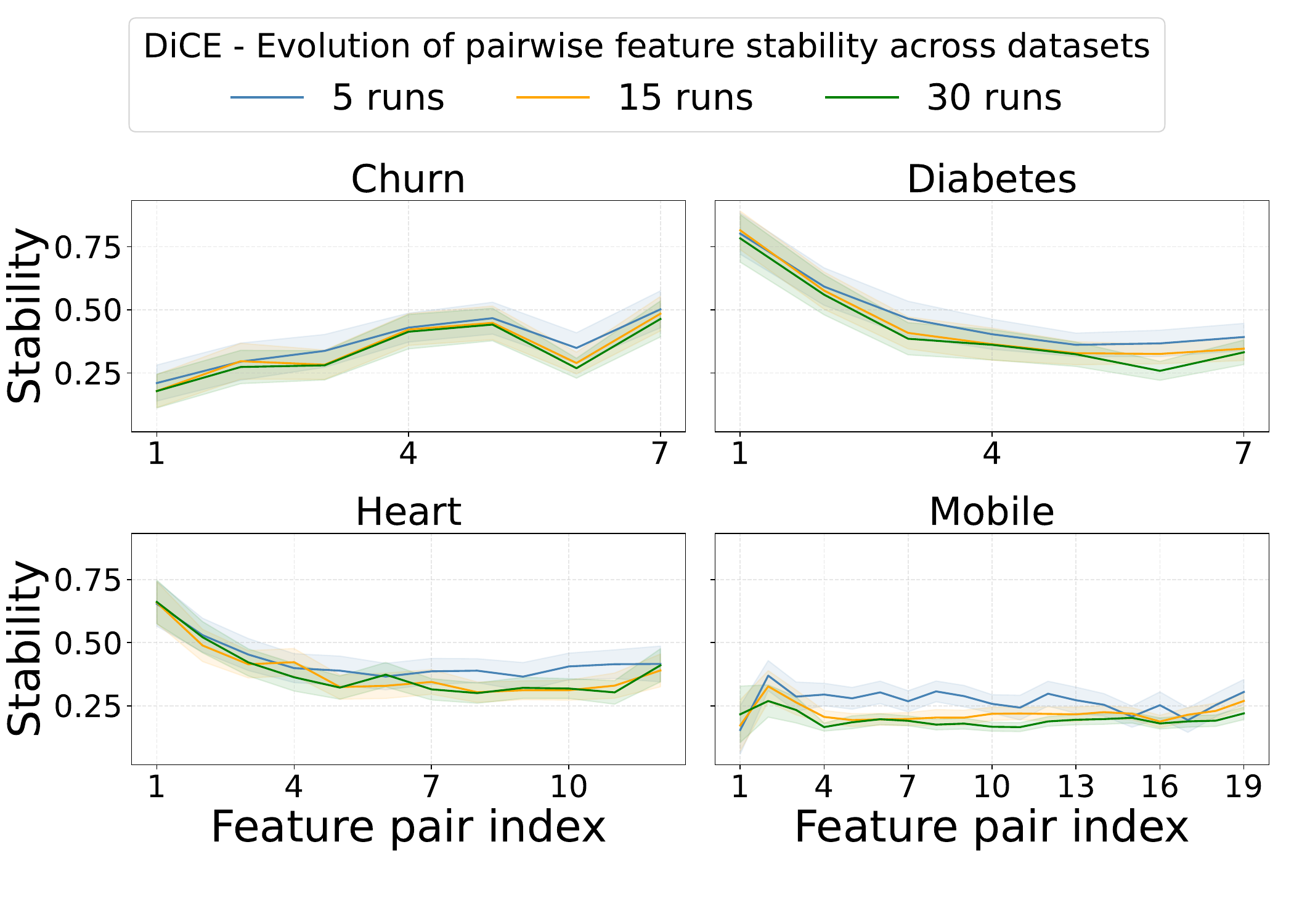}
    \end{subfigure}
    \hfill
    \begin{subfigure}{0.48\textwidth}
        \centering
        \includegraphics[width=\linewidth]{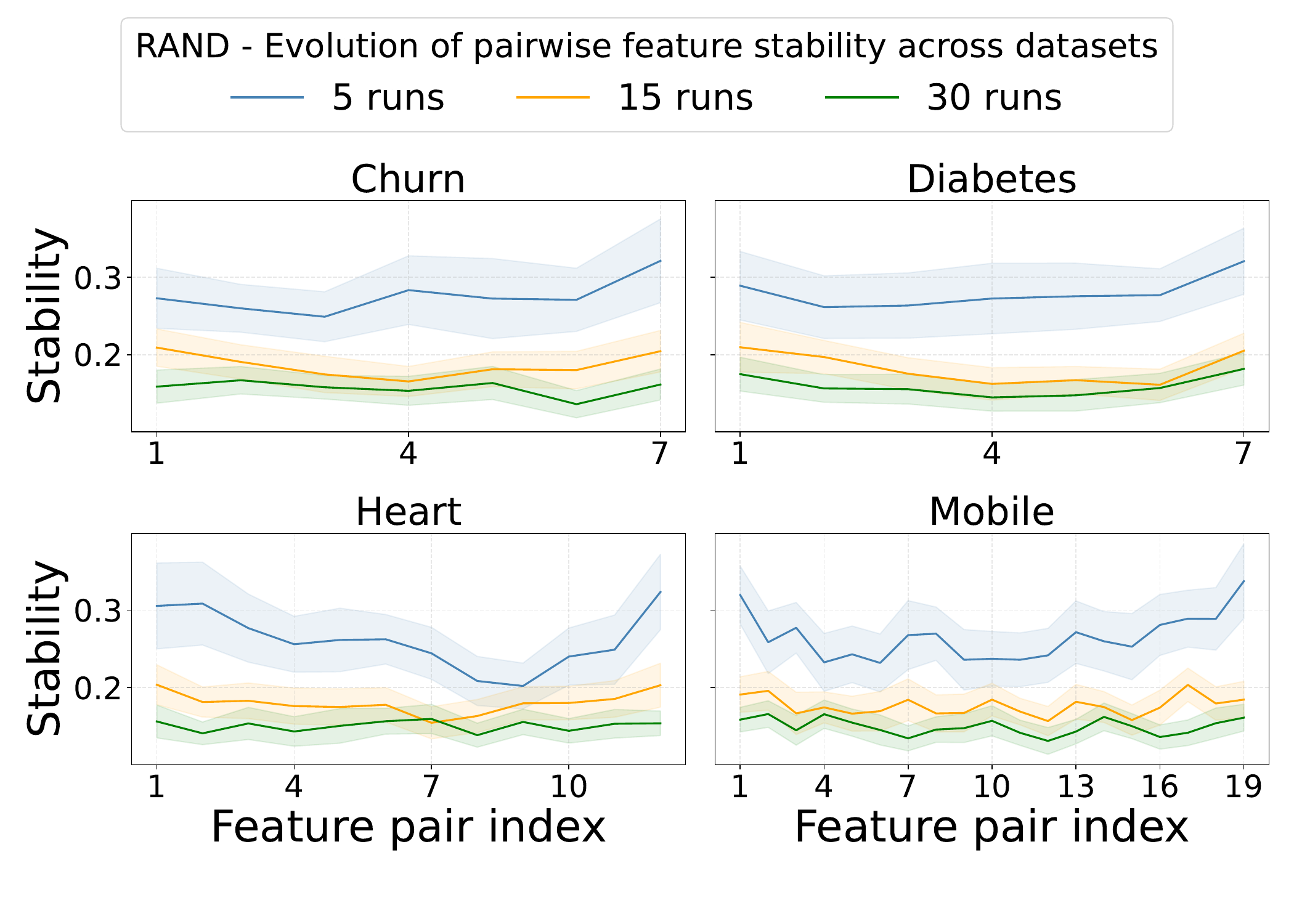}
    \end{subfigure}

    \caption{Stability values across different numbers of runs for LIME (top-left), SHAP (top-right), DiCE (bottom-left), and RAND (bottom-right) on the considered datasets. For Churn and Diabetes, SHAP yields perfect stability since the attribution vectors were quasi-deterministic.}
    
    \label{fig:run_analysis}
\end{figure}

\subsection{Other Experiments}\label{sec:more_experiments}
We have decided to keep the main body of the paper focused on the key experiments and insights. However, for the sake of completeness, we have conducted numerous additional experiments left to the appendix, which we summarize briefly here:
\begin{itemize}
    \item \textbf{Group-based stability} (Appendix \ref{sec:appenix_GB}): in many cases, the assignment vector is binary, i.e. $\alpha(x)\in [0,1]^n$. In this scenario, we show how our strategy can be used to identify a set of features that can be distinguished from the rest.
    % \item \textbf{Comparison of AM across models} (Section \ref{sec:appendix_additional}): we complete Section \ref{sec:comparison} by computing the pairwise comparison of AMs for a Decision Tree (DT) and a Support Vector Machine (SVM) model, indicating SHAP as the most stable according to our metric. 
    \item \textbf{Hyperparameter discussion} (Appendix \ref{sec:hyperparameter_appendix}): we analyze the impact of our framework's parameters; specifically, the threshold $l$ and the KDE kernel. We show how reducing $l$ increases the k-stability values (as we require less separability between distributions), as expected, whilst highlighting Silverman’s rule as a robust strategy for inferring the distributions.
    \item \textbf{Synthetic validation of mathematical results} (Appendix \ref{sec:sythetic_empirical_validation}): we complement Section \ref{sec:validation} by performing the analysis of k-stability in synthetic settings with $5$ and $10$ features. In general, the results confirm that greater separability of features leads to higher values of k-stability.
\end{itemize}

\section{Limitations and Further Details} \label{sec:limitations}
In this work we proposed an alternative method to quantify the stability of AMs, which in turn provides a meaningful way to compare explainers, as shown in Section \ref{sec:comparison}. Our contribution is not to establish which AM is superior, but rather to introduce a comparative metric that enables more nuanced evaluations on a specific aspect of the instability. It is important to stress that no AM can be considered universally more appropriate than others \cite{lipton2018mythos,Molnar}. Instead, the community has emphasized the need for multi-level and in-depth comparisons \cite{belaid2023compare,bodria2023benchmarking,agarwal2022openxai}, and our work contributes to this perspective. 

Naturally, our approach comes with some limitations.
First, our empirical evaluation focuses on controlled settings, namely classification tasks on standard tabular benchmark datasets. While this choice allows us to isolate and study the properties of the proposed metric, future work should investigate its applicability to other data modalities, such as images, text, and time series.

Second, the framework requires selecting both a density estimation procedure and the threshold parameter $l$. Although we discuss these design choices and their impact in the appendix, their selection may influence the resulting stability estimates.

Finally, as discussed in Appendix \ref{sec:theoretical_appendix}, low $k$-stability values do not necessarily imply that an attribution method is unfaithful. Moreover, even when an AM faithfully reflects the model's decision process, stochastic components in the explanation procedure may induce variability across runs, leading to overlapping attribution distributions and consequently lower stability scores. Therefore, $k$-stability alone cannot capture the overall quality of an explanation, but rather should be interpreted as a measure of ranking reliability and eventually combined with other metrics.

\section{Conclusions}
In this work, we have formalized a novel method for measuring the stability of AM rankings, providing an additional dimension for explanation quality. Our experiments show that our method achieves consistent results even with a low number of explainer executions, and highlight SHAP as the most k-stable explainer for the datasets tested.

The strategy we have defined is not limited to measuring ranking stability. It can also be employed to study feature-pair separability within an attribution vector and to identify groups of features that are significantly more prominent than the rest (group-based explanations).

This flexibility allows to measure explainer stability from a variety of perspectives, which is fundamental for determining the right explainer for any given task. In line with the XAI community, which has highlighted the importance of a multi-dimensional evaluation of explanation quality, our approach provides a principled methodology to measure ranking reliability, thereby complementing explainer assessment based on other considerations, such as faithfulness or simplicity.

\section*{Acknowledgements}

Eddie Conti has been partially supported by the predoctoral grant FI-STEP (2025 STEP 00108) from the Research and University Department of the Generalitat de Catalunya and cofunded by the European Social Fund Plus. \'Alvaro Parafita acknowledges his AI4Science fellowship within the “Generacion D” initiative by Red.es, Ministerio para la Transformación Digital y de la Función Pública, for talent attraction (C005/24-ED CV1), funded by NextGenerationEU through PRTR. Axel Brando received funding from the Horizon Europe Programme under the AI4DEBUNK Project (https://www.ai4debunk.eu), grant agreement num. 101135757.
% In conclusion, the XAI community has highlighted the need for evaluation from multiple perspectives in order to assess the quality of an explanation. In line with this, our metric aims to add a new dimension and address a specific need: measuring ranking reliability.

%Moreover, we provide a theoretical and experimental analysis highlighting how the degree of separation among the true feature importance values plays a key role in shaping our notion of stability, adding to the multitude of facets involved in the variability of stochastic AMs. In conclusion, this contribution provides a new and complementary perspective for analyzing, evaluating and comparing explainers in greater depth, while also linking stability with Uncertainty Quantification as a promising new direction.  

% the environments 'definition', 'lemma', 'proposition', 'corollary',
% 'remark', and 'example' are defined in the LLNCS documentclass as well.
%

%
% ---- Bibliography ----
%
% BibTeX users should specify bibliography style 'splncs04'.
% References will then be sorted and formatted in the correct style.
% 6700 characters
\bibliographystyle{splncs04}
\bibliography{mybib}
%% Note that this preceding line implies that you store your BibTeX references in a file called 'mybibliography.bib'. If you instead store your references in a file with a different name, for instance 'references.bib', the preceding line should read '\bibliography{references}'. Whatever you do, DO NOT put the file name extension .bib inside the \bibliography command; this will trip up LaTeX compilers. 
%
% If you do not want to use BibTeX, you can also type up the bibliography exactly as you see fit, using the following structure:
% Note that this number 8 reserves an amount of space (equal to the natural width of the given number) for the label of your references; if you have more than 9 references, you will want to change this number to 18. If you have more than 19 references, this number is best changed to 88. If you have more than 99 references, I salute you.
\setcounter{section}{0}
\renewcommand\thesection{\Alph{section}}
\section{Supplementary Material}
\subsection{Group-based Stability} \label{sec:appenix_GB}
In this section we extend the notion of separability of attribution scores to \emph{binary explanations}, where an AM identifies a set of relevant features rather than producing only a ranking. Let $G \subseteq [n]$ denote the selected features and $\bar{G}$ its complement. We define the \emph{group-based stability} of $G$ as:
\[
\mathrm{GroupStab}(G)=
f\bigl(\{ d(p_i,p_j) : i \in G,; j \in \bar{G} \}\bigr),
\]
where $f$ is an aggregation function such as the mean, median, or minimum. This formulation captures different notions of separability between relevant and non-relevant features.

The framework naturally supports several explanation strategies, including top-$k$ features, threshold-based selection, and cumulative attribution criteria. In the special case where $G$ contains the top-$k$ ranked features, we recover a condition analogous to $k$-stability:
\[
d(p_i,p_j)\ge l
\qquad
\forall i\in G,; \forall j\in\bar{G},
\]
for a threshold $l\in[0,1]$. This allows identifying the largest value of $k$ for which the selected group remains sufficiently separable.

In \autoref{fig:group_stability} we apply this framework to LIME explanations in the case of the RF model. For each instance and each value of $k$, we verify whether the top-$k$ features satisfy the separability condition with $l=0.9$. The result is binary: a value of $1$ is assigned if the condition holds and $0$ otherwise. The resulting curves report, across 50 instances, the proportion of explanations certified as separable.

The analysis highlights several trends. First, evaluating separability as a function of $k$ provides a principled way to determine the size of the explanation according to the desired level of stability. Second, datasets such as Churn and Heart exhibit non-monotonic certification curves, suggesting that while the precise ordering of important features may be unstable, groups of features can still remain jointly separable (e.g., in Churn with $6$ features we have a certification rate $\approx 60\%$, while with $4$ of $\approx 30\%$). Finally, the Mobile dataset shows consistently low certification rates, likely due to the intrinsic complexity of the dataset and the difficulty of the explainer in identifying clearly separable features.

\begin{figure}[!h]
    \centering
    \includegraphics[scale=0.23]{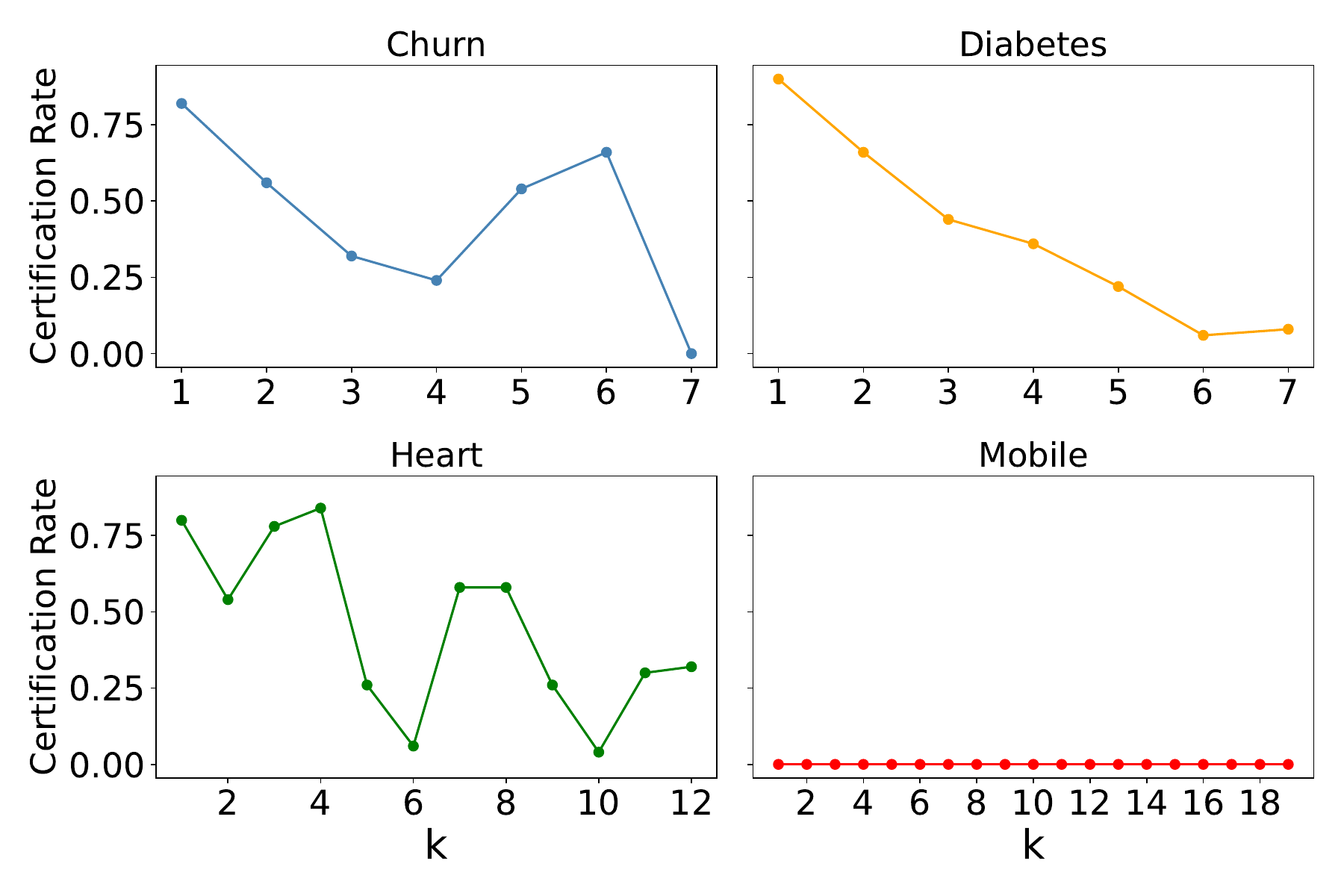}
    \caption{Proportion of instances in which the LIME top-$k$ features form a separable group according to the group-stability condition.}
    \label{fig:group_stability}
\end{figure}

\subsection{Additional comparison experiments} \label{sec:appendix_additional} 
In this section, we supplement the experiments comparing attribution methods. In particular, we find that SHAP produces more separable rankings for DT and LG as well (\autoref{tab:comparison_all_dt}, \autoref{tab:comparison_all_lr}), achieving higher values in our stability metric 

\begin{table*}[!h]
    \centering
           \caption{Pairwise comparison of $k$-stability values across datasets for the DT model.}
    \begin{tabular}{l|cccc|cccc|cccc|cccc}
        \toprule
        & \multicolumn{4}{c|}{\textbf{Churn}} 
        & \multicolumn{4}{c|}{\textbf{Diabetes}} 
        & \multicolumn{4}{c|}{\textbf{Heart}} 
        & \multicolumn{4}{c}{\textbf{Mobile}} \\
        \cmidrule(lr){2-5} \cmidrule(lr){6-9} \cmidrule(lr){10-13} \cmidrule(lr){14-17}
        & L & S & D & R 
        & L & S & D & R
        & L & S & D & R
        & L & S & D & R \\
        \midrule
        L & 1.00 & 0.00 & 0.93 & 1.00 
             & 1.00 & 0.00 & 1.00 & 1.00
             & 1.00 & 0.23 & 1.00 & 1.00
             & 1.00 & 0.23 & 1.00 & 1.00 \\
        S & \textbf{1.00} & \textbf{1.00} & \textbf{1.00} & \textbf{1.00} 
             & \textbf{1.00} & \textbf{1.00} & \textbf{1.00} & \textbf{1.00}
             & \textbf{0.90} & \textbf{1.00} & \textbf{0.97} & \textbf{1.00}
             & \textbf{0.93} & \textbf{1.00} & \textbf{1.00} & \textbf{1.00} \\
        D & 0.50 & 0.00 & 1.00 & 1.00 
             & 0.20 & 0.00 & 1.00 & 1.00
             & 0.27 & 0.03 & 1.00 & 1.00
             & 0.07 & 0.00 & 1.00 & 1.00 \\
        R & 0.30 & 0.00 & 0.60 & 1.00 
             & 0.17 & 0.00 & 0.63 & 1.00
             & 0.20 & 0.03 & 0.90 & 1.00
             & 0.00 & 0.00 & 0.80 & 1.00 \\
        \bottomrule
    \end{tabular}

    \label{tab:comparison_all_dt}

\end{table*}

\begin{table*}[!h]
    \centering
           \caption{Pairwise comparison of $k$-stability values across datasets for the LR model.}
    \begin{tabular}{l|cccc|cccc|cccc|cccc}
        \toprule
        & \multicolumn{4}{c|}{\textbf{Churn}} 
        & \multicolumn{4}{c|}{\textbf{Diabetes}} 
        & \multicolumn{4}{c|}{\textbf{Heart}} 
        & \multicolumn{4}{c}{\textbf{Mobile}} \\
        \cmidrule(lr){2-5} \cmidrule(lr){6-9} \cmidrule(lr){10-13} \cmidrule(lr){14-17}
        & L & S & D & R 
        & L & S & D & R
        & L & S & D & R
        & L & S & D & R \\
        \midrule
        L & 1.00 & 0.00 & 0.97 & 1.00 
             & 1.00 & 0.00 & 1.00 & 1.00
             & 1.00 & 0.13 & 0.90 & 1.00
             & 1.00 & 0.20 & 1.00 & 1.00 \\
        S & \textbf{1.00} & \textbf{1.00} & \textbf{1.00} & \textbf{1.00} 
             & \textbf{1.00} & \textbf{1.00} & \textbf{1.00} & \textbf{1.00}
             & \textbf{0.97} & \textbf{1.00} & \textbf{0.97} & \textbf{1.00}
             & \textbf{0.90} & \textbf{1.00} & \textbf{1.00} & \textbf{1.00} \\
        D & 0.20 & 0.00 & 1.00 & 1.00 
             & 0.30 & 0.00 & 1.00 & 1.00
             & 0.60 & 0.10 & 1.00 & 1.00
             & 0.07 & 0.03 & 1.00 & 1.00 \\
        R & 0.13 & 0.00 & 0.60 & 1.00 
             & 0.20 & 0.00 & 0.63 & 1.00
             & 0.60 & 0.16 & 0.90 & 1.00
             & 0.00 & 0.00 & 0.80 & 1.00 \\
        \bottomrule
    \end{tabular}

    \label{tab:comparison_all_lr}

\end{table*}

\subsection{Hyperparameter discussion}  \label{sec:hyperparameter_appendix}

This section briefly analyzes the sensitivity of the proposed k-stability measure to two key hyperparameters: the density estimator and the threshold $l$, using the Diabetes dataset and LIME as an illustrative case.

Regarding the density estimator, the use of Gaussian kernels with Silverman’s rule is considered a robust and standard choice. Other bandwidth values are tested in \autoref{fig:kstability_comparison}. Although different bandwidths produce slightly different inferred distributions of attribution values, the resulting k-stability remains very similar across settings, except for $bandwidth=1$, which yields lower stability. Together with previous findings showing that fewer samples are sufficient to separate attribution values, this suggests that k-stability is not strongly dependent on the modeling capacity of the density estimator, supporting Silverman’s rule as a safe default.

The threshold $l$ has a more pronounced effect. It determines how large the distributional distance between feature attributions must be to ensure a stable ordering. Lower values of $l$ allow more overlap between feature importance distributions, potentially leading to less reliable rankings, while higher values impose stricter separability requirements. There is no universally optimal choice of $l$, as it depends on the application context (e.g., more conservative settings in high-stakes domains such as medical diagnosis, versus more relaxed exploratory analyses). In this work, $l=0.9$ is used. In \autoref{fig:kstability_comparison}, different threshold values are compared, showing that lower $l$ values lead to higher k-stability, since the stability condition is more easily satisfied.

\begin{figure}[!h]
    \centering

    \begin{subfigure}{0.48\textwidth}
        \centering
        \includegraphics[width=\linewidth]{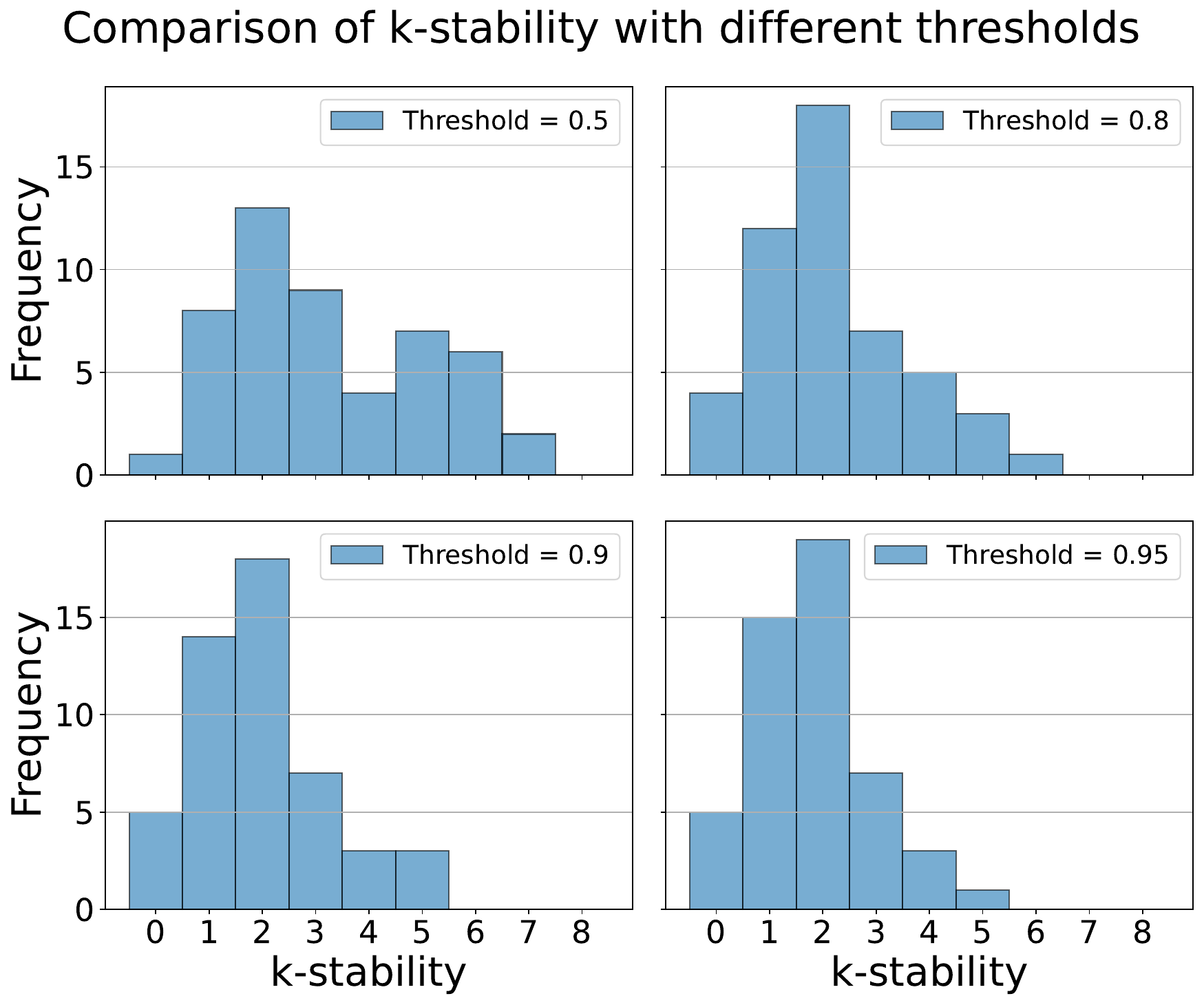}
    \end{subfigure}
    \hfill
    \begin{subfigure}{0.48\textwidth}
        \centering
        \includegraphics[width=\linewidth]{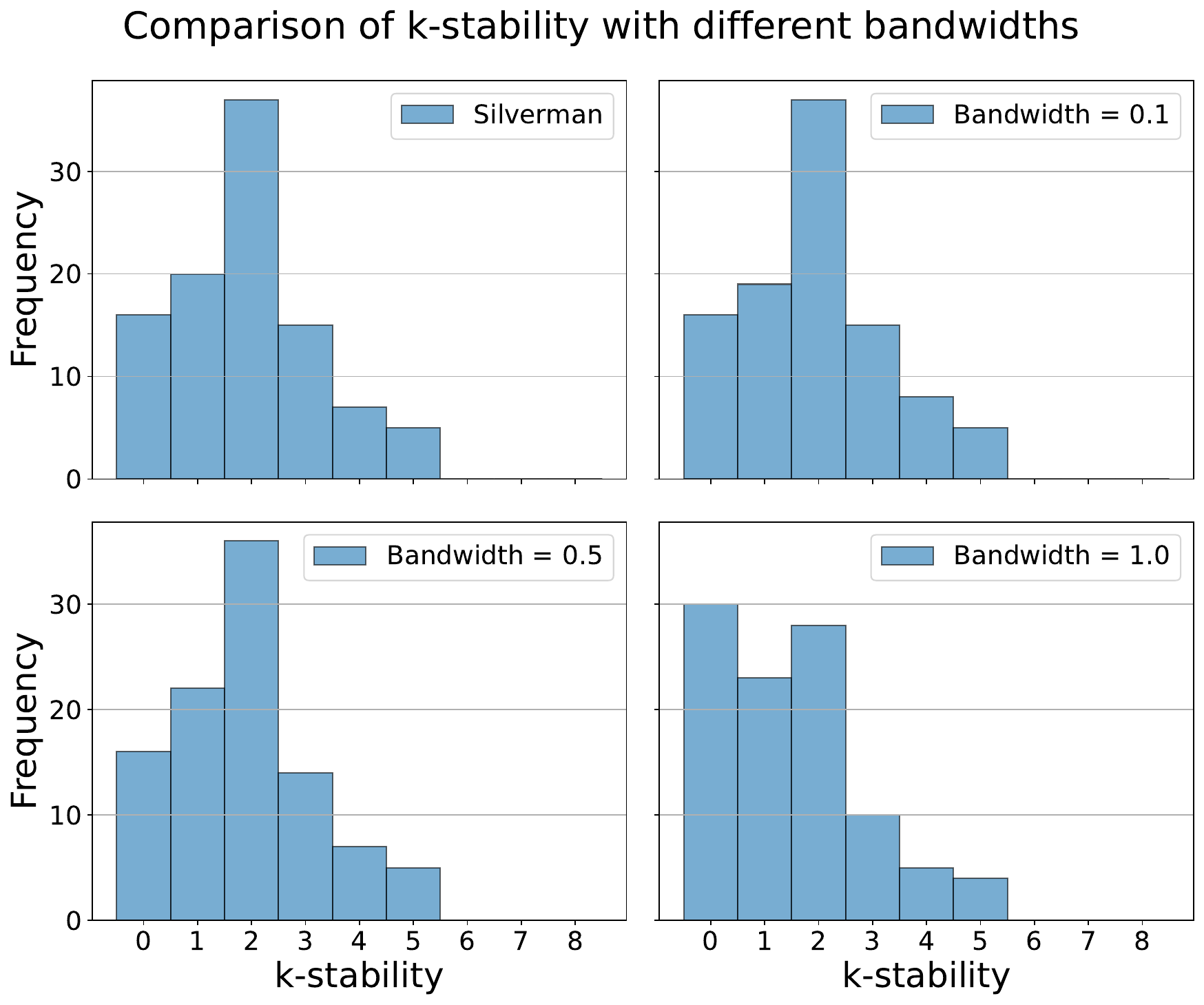}
    \end{subfigure}

    \caption{Distributional comparison of k-stability values under different parameter settings. On the left, different threshold values $l$ ($0.5$, $0.8$, $0.9$, and $0.95$) are considered. On the right, different bandwidth values are compared: Silverman's rule, $0.1$, $0.5$, and $1$.}
    
    \label{fig:kstability_comparison}
\end{figure}

\subsection{Theoretical Guarantees and Limitations} \label{sec:theoretical_appendix}
We begin by establishing a preliminary result that guarantees the well-posedness of our stability metric in the ideal scenario where AMs converge to deterministic feature importances. Throughout the following discussion, we will assume that the distribution of attribution scores is approximately Gaussian. This hypothesis not only enables us to derive analytical results for the proposed metric, but it is also consistent with assumptions commonly adopted in the related literature \cite{covert2021improving,zhou2021s,goldwasser2024stabilizing,goldwasser2024statistical,agarwal2022openxai,aas2021explaining}.
Moreover, in the following discussion, we evaluate explainers with their hyperparameters fixed; this is essential to model the variance of the attribution scores, since increasing the sampling parameters (e.g., in SHAP, by raising the number of coalitions or background points) would eventually lead to quasi-deterministic attributions, thus removing the variability that our analysis aims to characterize.

We recall the definition of the metric $d$:

\begin{equation} \label{eq:distance}
d(p,q) = 1 - \frac{\int_{\text{supp}(p) \cap \text{supp}(q)} \min(p(x),q(x)) dx}{\int_{\text{supp}(p) \cup \text{supp}(q)} \max(p(x),q(x)) dx},
\end{equation}

\begin{proposition}[Well-posedness in the ideal case] \label{prop:well-posedness}
Let $x\in\mathbb{R}^n$ be a point to be explained with ground-truth importances 
$f_1>\cdots>f_n$. Consider an AM $\alpha$ with a stochastic component
that produces, at each run $s=1,\ldots,m$, an attribution vector 
$\alpha^{(s)}=\{\alpha_1^{(s)},\ldots,\alpha_n^{(s)}\}$. 
Denote by $p_{i,m}$ the empirical probability density function associated with the values 
$\{\alpha_i^{(s)}\}_{s=1}^m$.  

Assume that, for each $i$, $p_{i,m}$ converges in distribution to the Dirac measure 
$\delta_{f_i}$ as $m \to \infty$. Then, for the distance metric $d(\cdot,\cdot)$ defined
in \eqref{eq:distance}, we have
\[
d(p_{i,m},p_{i+1,m}) \to 1 \quad \text{as } m \to \infty,
\quad \forall i \in \{1,\ldots,n-1\}.
\]
\end{proposition}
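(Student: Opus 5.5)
The plan is to show that the overlap term in \eqref{eq:distance} vanishes while the normalising term stays bounded below by $1$. Throughout, I take $p_{i,m}$ to be a genuine probability density (e.g.\ the KDE estimate used in practice). If one insists on atomic empirical measures, the same argument applies verbatim with densities taken with respect to a common dominating measure, such as the sum of the two measures.

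\textbf{Step 1 (separating neighbourhoods).} Fix $i$ and set $\Delta := f_i - f_{i+1} > 0$. Define the open intervals $U_i := (f_i - \Delta/3, f_i + \Delta/3)$ and $U_{i+1} := (f_{i+1} - \Delta/3, f_{i+1} + \Delta/3)$, which are disjoint. Since $p_{i,m} \to \delta_{f_i}$ in distribution, the Portmanteau theorem applied to the open set $U_i$ gives
\[
\liminf_{m} P_{i,m}(U_i) \ge \delta_{f_i}(U_i) = 1,
\]
where $P_{i,m}$ is the law with density $p_{i,m}$. Hence $\varepsilon_{i,m} := P_{i,m}(U_i^c) \to 0$, and likewise $\varepsilon_{i+1,m} := P_{i+1,m}(U_{i+1}^c) \to 0$.

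\textbf{Step 2 (numerator vanishes).} First note that restricting the integrals to $\mathrm{supp}(p)\cap\mathrm{supp}(q)$ and $\mathrm{supp}(p)\cup\mathrm{supp}(q)$ changes nothing. Indeed, $\min(p,q)=0$ off the intersection and $\max(p,q)=0$ off the union, so both can be taken over $\mathbb{R}$. Split $\mathbb{R} = U_i \cup U_i^c$. On $U_i$, bound $\min(p_{i,m},p_{i+1,m}) \le p_{i+1,m}$; since $U_i \subseteq U_{i+1}^c$, this piece is at most $\varepsilon_{i+1,m}$. On $U_i^c$, bound $\min \le p_{i,m}$, giving at most $\varepsilon_{i,m}$. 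Therefore
\[
\int \min(p_{i,m},p_{i+1,m}) \le \varepsilon_{i,m} + \varepsilon_{i+1,m} \to 0 .
\]

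\textbf{Step 3 (denominator bounded below, and conclusion).} Since $\max(p,q) \ge p$, we have $\int \max(p_{i,m},p_{i+1,m}) \ge 1$. Combining with Step 2,
\[
0 \le 1 - d(p_{i,m},p_{i+1,m}) \le \varepsilon_{i,m} + \varepsilon_{i+1,m} \to 0 ,
\]
so $d(p_{i,m},p_{i+1,m}) \to 1$ for every $i \in \{1,\ldots,n-1\}$.

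\textbf{Main obstacle.} The only delicate point is conceptual rather than computational: making sense of a ``density'' for an empirical distribution, and ensuring that convergence in distribution controls the mass near $f_i$. The first issue is handled by the dominating-measure remark above. The second is handled by using open neighbourhoods in the Portmanteau theorem, which is exactly where the strict ordering $f_i > f_{i+1}$ (i.e.\ $\Delta > 0$) is needed.
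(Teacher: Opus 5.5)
Your proof is correct, and it takes a different route from the paper's. The paper argues at the level of the limit. Since $f_i\neq f_{i+1}$, the measures $\delta_{f_i}$ and $\delta_{f_{i+1}}$ are mutually singular, so $d(\delta_{f_i},\delta_{f_{i+1}})=1$, and the paper then concludes by invoking ``the continuity of $d$''. You never evaluate $d$ at the limit. Instead you derive the quantitative bound $1-d(p_{i,m},p_{i+1,m})\le \varepsilon_{i,m}+\varepsilon_{i+1,m}$ from the Portmanteau theorem on disjoint open neighbourhoods, together with $\int\max(p,q)\ge 1$.

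The paper's version buys brevity, but it rests on a property that does not hold. The distance $d$ is \emph{not} continuous under convergence in distribution. For instance, the uniform densities on $[0,1/m]$ and $[-1/m,0]$ both converge to $\delta_0$ and satisfy $d=1$ for every $m$, while the limiting pair has $d=0$. Moreover, $d$ between Dirac masses only makes sense after extending the definition through a dominating measure, which you handle explicitly and the paper does not.

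What is true is a one-sided statement. Writing $T$ for the total variation distance, $\int\min(p,q)=1-T$ and $\int\max(p,q)=1+T$, so $d=2T/(1+T)$. Total variation is lower semicontinuous under weak convergence, so $d$ is as well. Combined with $d\le 1$, this repairs the paper's argument. Your Step~2 is in effect a self-contained proof of that lower semicontinuity at a pair of separated point masses, and it gives an explicit rate in terms of the tail masses.
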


\begin{proof}
Since $p_{i,m} \xrightarrow{d} \delta_{f_i}$ and $p_{i+1,m} \xrightarrow{d} \delta_{f_{i+1}}$
as $m \to \infty$, with $f_i \neq f_{i+1}$, the two limiting measures are mutually singular.
By definition of the distance $d(\cdot,\cdot)$, since they are separated, this implies
\[
d(\delta_{f_i},\delta_{f_{i+1}}) = 1.
\]
By the continuity of $d$, we obtain
\[
d(p_{i,m},p_{i+1,m}) \to d(\delta_{f_i},\delta_{f_{i+1}}) = 1,
\]
which proves the claim.
\end{proof}

This result confirms that our metric is well-posed in the sense that, if an AM
were perfectly faithful and its stochasticity vanished asymptotically, then it would achieve
the maximum possible stability under the $d$ metric. We will focus later on the case of having a couple of feature importances $f_i,f_j$ to be equal and analyze how the metric $d$ behaves in this situation.

\begin{remark}[Limitations of k-stability]
The proposed metric has two complementary limitations that clarify its interpretation.

First, k-stability may attain its maximum value even for completely unfaithful attribution methods. In particular, even when the learned attributions converge to arbitrary but distinct constants (i.e., degenerate distributions), the resulting pairwise separability can be maximal, leading to perfect stability. This shows that k-stability alone cannot distinguish faithful from unfaithful explanations, and should not be interpreted as a direct measure of correctness.

Second, even in the case of faithful and unbiased attribution methods, stability is fundamentally constrained by stochasticity and by the intrinsic closeness of the true feature importances. When the separation between ground-truth values is small relative to the estimator variance, distributional overlap can prevent the stability condition from being satisfied, even asymptotically.

%Overall, these observations highlight that k-stability captures the separability of attribution distributions rather than their absolute faithfulness, and should be interpreted as one component within a broader evaluation framework for explanation quality.
\end{remark}

Let us now formalize the previous remark in a simplified setting. Consider an unbiased and faithful AM $\alpha: \mathbb{R}^n \to [0,1]^n$ (which requires to rescale in the case of DiCE, or taking the absolute value in LIME and SHAP). We focus on two features $f_i,f_{i+1}$, with $f_i > f_{i+1}$ (the case $f_i=f_{i+1}$ is studied in \ref{sec:f_iequality}), and assume equal variances\footnote{
    Assuming equal variances simplifies the exposition without altering the substance of the argument. In the general case, different variances would lead to additional intersection points and require a case-by-case analysis. However, the core intuition concerning the separability between the two distributions remains unchanged.
} $\sigma_i^2 = \sigma_{i+1}^2$. As a first step, we observe that if the variances coincide, the two Gaussian distributions intersect at exactly one point. Let $p(x)=\mathcal{N}(f_i,\sigma_i^2)$ and $q(x)=\mathcal{N}(f_{i+1},\sigma_{i+1}^2)$. Solving $p(x)=q(x)$ is equivalent to requiring $\log(p(x))-\log(q(x))=0$. From the expression of the Gaussian density, this leads to 
\[
\log(p(x))= \log\Bigl(\frac{1}{\sqrt{2\pi\sigma_i^2}}e^{-\frac{(x-f_i)^2}{2\sigma_i^2}}\Bigr) = -\frac{1}{2}\log(2\pi)-\frac{1}{2}\log(\sigma_i^2)-\frac{1}{2}\frac{(x-f_i)^2}{\sigma_i^2},
\]
then
\begin{equation}  \label{eq:intersection}
\log(p(x))-\log(q(x))= \frac{1}{2}\Bigl(-\log(\sigma_i^2)+\log(\sigma_{i+1}^2)-\frac{(x-f_i)^2}{\sigma_i^2}+\frac{(x-f_{i+1})^2}{\sigma_{i+1}^2}\Bigr).
\end{equation}
Since $\sigma_i^2=\sigma_{i+1}^2$, \eqref{eq:intersection} reduces to
\begin{equation*}
\begin{aligned}
-\frac{1}{\sigma_i^2}(x^2+f_i^2-2xf_i)+
\frac{1}{\sigma_i^2}(x^2+f_{i+1}^2-2xf_{i+1})&=0 \\
\frac{1}{\sigma_i^2}(2xf_i-f_i^2-2xf_{i+1}+f_{i+1}^2)&=0 \\
x&=\frac{f_i+f_{i+1}}{2}.
\end{aligned}
\end{equation*}
Returning to our main discussion, we now consider the distance $d$ \eqref{eq:distance} applied to $p$ and $q$. Since the support of a Gaussian distribution is $\mathbb{R}$, the integrals extend over the whole real line (we omit this notation for brevity). In this case, the metric reduces to 
\[
d(p,q)=1-\frac{\int \min(p(x),q(x))}{\int \max(p(x),q(x))}.
\]
In this setting there exists only one intersection $\tilde{x}=(f_i+f_{i+1})/2$, and since $f_i>f_{i+1}$
\begin{equation} \label{eq:metric_rewritten}
\begin{aligned}
d(p,q) = 1- \frac{\int_{-\infty}^{\tilde{x} }p(x)+\int_{\tilde{x}}^{\infty}q(x)}{\int_{-\infty}^{\tilde{x}} q(x)+\int_{\tilde{x}}^{\infty} p(x)}&=1-\frac{(1-\int_{\tilde{x}}^{\infty}p(x))+(1-\int_{-\infty}^{\tilde{x}} q(x))}{\int_{-\infty}^{\tilde{x}} q(x)+\int_{\tilde{x}}^{\infty}p(x)} 
\end{aligned} 
\end{equation}
Now, let us set 
\[
\Delta:= f_i-f_{i+1}, \qquad z:= \frac{\Delta}{2\sigma_i}=\frac{f_i-f_{i+1}}{2\sigma_i}.
\]
Using the Gaussian cumulative distribution function $\Phi$, the relevant integrals are 
\begin{align*}
\int_{-\infty}^{\tilde{x}} p(x) &= \Phi\!\bigl((\tilde{x}-f_i)/\sigma\bigr) = \Phi(-z)=1-\Phi(z),\\
\int_{\tilde{x}}^{\infty} p(x) &= 1-\Phi(-z)=\Phi(z)
\end{align*}
To compute the integrals concerning $q(x)$, we can observe that 
\[
q(x) = \mathcal{N}(f_{i+1},\sigma_i^2) = N(f_i-\Delta,\sigma_i^2) = p(x+\Delta).
\]
As a consequence,
\[
\int_{-\infty}^{\tilde{x}} q(x)\,dx = \int_{-\infty}^{\tilde{x}} p(x+\Delta)\,dx = \int_{-\infty}^{\tilde{x}+\Delta}p(t)\,dt = \Phi \Bigl(\frac{\tilde{x}+\Delta-f_i}{\sigma_i}\Bigr)=\Phi(z)
\]
since
\[
\tilde{x}+\Delta-f_i = \frac{f_i+f_{i+1}}{2}+(f_i-f_{i+1})-f_i = \frac{f_i-f_{i+1}}{2}= \frac{\Delta}{2}.
\]
Now, $\int_{\tilde{x}}^{\infty} q(x)= 1-\Phi(z)$ and so
\eqref{eq:metric_rewritten} becomes:
\[
d(p,q) = 1-\frac{2(1-\Phi(z))}{2\Phi(z)}=\frac{2\Phi(z)-1}{\Phi(z)}.
\]
As immediate observations we have:
\begin{itemize}
    \item For $\Delta \to 0^+$ (namely $z\to 0^+$), we have $\Phi(z)\to 1/2$ and so
    \[
    d(p,q) \to \frac{2\cdot \frac{1}{2}-1}{\frac{1}{2}}=0.
    \]
    \item For $\Delta \to \infty$ (namely $z\to \infty$), we have $\Phi(z)\to 1$ and so
    \[
    d(p,q) \to \frac{2\cdot 1-1}{\frac{1}{2}}=1.
    \]
\end{itemize}
In conclusion, we have formally shown in a simple setting how the separation affects the metric $d$ and so the stability of the ranking of the AM.

\subsection{Experiments on synthetic datasets}\label{sec:sythetic_empirical_validation}
To validate the theoretical analysis discussed above, we designed a synthetic experiment. The dataset consists of $n=10$ features with ground-truth coefficients evenly spaced in the interval $[r_{min}, 1]$, where $r_{min}$ is a parameter (indicating the minimum of the range) controlling the closeness of the features:
\[
\mathbf{f} = (f_1, \dots, f_n), \quad f_i = r_{min} + (1-r_{min}) \frac{n-i}{n-1}, \quad i = 1, \dots, n.
\]

The input values $X \in \mathbb{R}^{N \times n}$ for each feature are sampled independently from a standard normal distribution:
\[
X_{ij} \sim \mathcal{N}(0,1), \quad i=1,\dots,N, \ j=1,\dots,n,
\]
where $N=2000$ is the total number of samples. The continuous target $y \in \mathbb{R}^N$ is generated as a linear combination of the features with added Gaussian noise:
\[
y_i = \sum_{j=1}^{n} f_j X_{ij} + \epsilon_i, \quad \epsilon_i \sim \mathcal{N}(0,\sigma^2), \quad i=1,\dots,N,
\]
where we set $\sigma=0.5$. The binary target is then obtained by thresholding $y$ at 0: $y_i^{\text{bin}} = \mathbf{1}\{y_i > 0\}$ for $i=1,\ldots,N$.

For the analysis, a RF classifier was trained on the synthetic datasets, and LIME explanations were computed 50 times for each of the first 50 test instances. The results are reported in \autoref{fig:synthetic_k_stability}. We consider two settings with 10 and 5 features, respectively, while progressively reducing the separation between the ground-truth feature importances by sampling the coefficients from the intervals $[0.4,1]$, $[0.6,1]$, and $[0.8,1]$.

In both cases, k-stability decreases as the feature importances become closer, confirming that the proposed metric is sensitive to attribution separability. Moreover, the 5-feature setting generally yields higher stability values than the 10-feature setting, suggesting that increasing the number of features further reduces the separability of the induced rankings. Overall, these controlled experiments support the theoretical analysis and validate the ability of k-stability to capture changes in feature importance separation.

\begin{figure}[!h]
    \centering

    \begin{subfigure}{\columnwidth}
        \centering
        \includegraphics[width=\columnwidth]{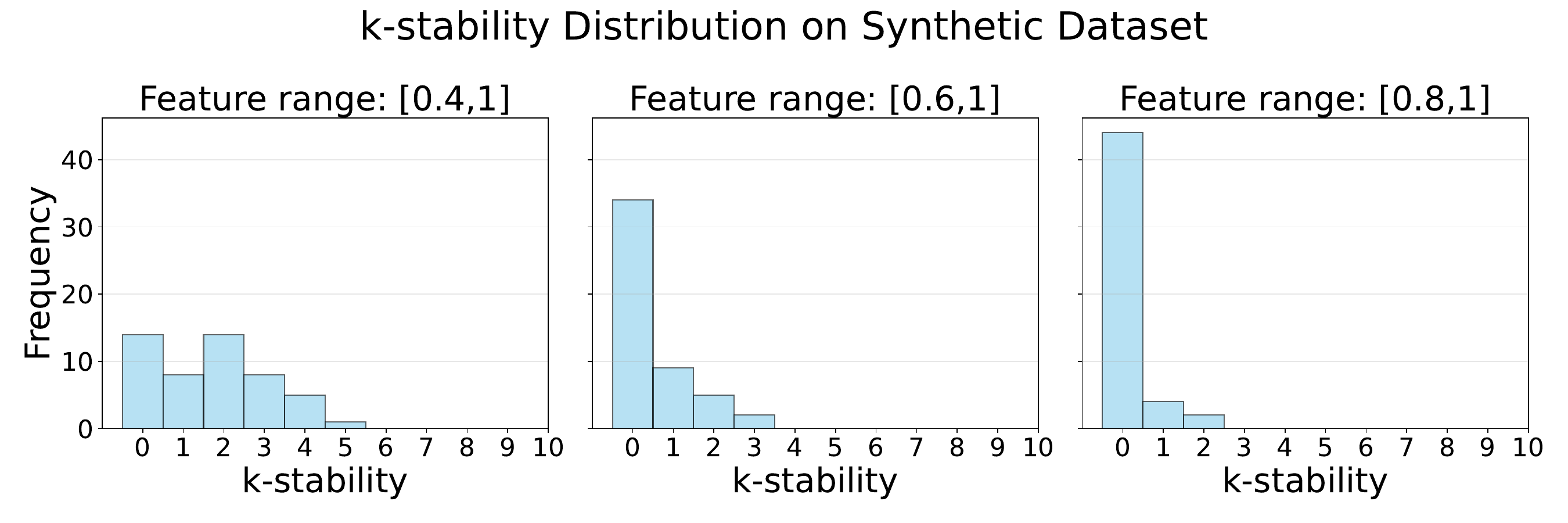}
        \caption{Synthetic dataset with 10 features.}
        \label{fig:synthetic_k_stability_10}
    \end{subfigure}

    \vspace{2mm}

    \begin{subfigure}{\columnwidth}
        \centering
        \includegraphics[width=\columnwidth]{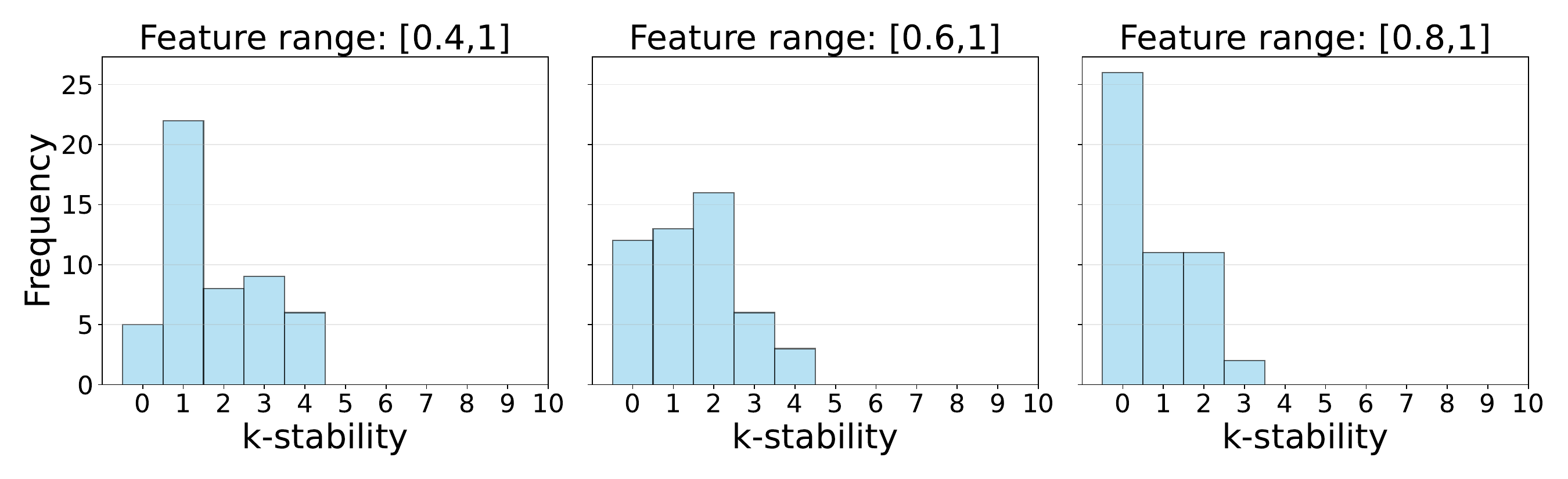}
        \caption{Synthetic dataset with 5 features.}
        \label{fig:synthetic_k_stability_5}
    \end{subfigure}

    \caption{Validation of k-stability on synthetic datasets. Ground-truth feature importances are sampled from progressively narrower intervals, reducing their separation. In both settings, k-stability decreases as feature importances become closer, while the 5-feature dataset exhibits generally higher stability than the 10-feature dataset.}
    \label{fig:synthetic_k_stability}
\end{figure}

\subsection{The case $f_i=f_{i+1}$} \label{sec:f_iequality}
To complement the previous analysis, we now focus our attention to the case of an explainer converging in mean to the same value for two attributions and what insights we can derive from the metric $d$. Let $p(x)=\mathcal{N}(\mu,\sigma_1^2),\, q(x)=\mathcal{N}(\mu,\sigma_2^2)$ with $\sigma_1^2\neq\sigma_2^2$. Following the same strategy, the points of intersection are solutions to 
\[
-\frac{1}{2}\log\sigma_1^2-\frac{(x-\mu)^2}{2\sigma_1^2}
=
-\frac{1}{2}\log\sigma_2^2-\frac{(x-\mu)^2}{2\sigma_2^2},
\]
which can be written as
\begin{equation*}
\begin{aligned}   
(x-\mu)^2\!\left(\frac{1}{\sigma_1^2}-\frac{1}{\sigma_2^2}\right) 
=\log\frac{\sigma_2^2}{\sigma_1^2}\,\,\implies
(x-\mu)^2
=\log\!\left(\frac{\sigma_2^2}{\sigma_1^2}\right)\,\frac{\sigma_1^2\sigma_2^2}{\sigma_2^2-\sigma_1^2}.
\end{aligned}
\end{equation*}
Now, since the right term is always positive, the solutions are:
\[
x=\mu\pm\sqrt{\log\left(\frac{\sigma_2^2}{\sigma_1^2}\right)\frac{\sigma_1^2\sigma_2^2}{\sigma_2^2-\sigma_1^2}}.
\]
We are now ready to compute the metric $d$ in this case. 
Let us assume without loss of generality $\sigma_1<\sigma_2$ (if $\sigma_1=\sigma_2$ then the two distributions are identical and therefore their feature importances indistinguishable). Let us denote the two intersections as
\[
x_{1,2}=\mu\mp R, \qquad R = \sqrt{\log\left(\frac{\sigma_2^2}{\sigma_1^2}\right)\frac{\sigma_1^2\sigma_2^2}{\sigma_2^2-\sigma_1^2}}.
\]
Since, the narrower Gaussian $p$ prevails near $\mu$ and the wider one $q$ prevails at the tails, we have the natural partition
\[
\begin{aligned}
\int\min(p,q)
&=\int_{-\infty}^{x_1} p(x)+\int_{x_1}^{x_2} q(x)+\int_{x_2}^{\infty} p(x),\\
\int\max(p,q)
&=\int_{-\infty}^{x_1} q(x)+\int_{x_1}^{x_2} p(x)+\int_{x_2}^{\infty} q(x).
\end{aligned}
\]
By using the normal cumulative distribution function $\Phi$ and exploiting the symmetry of \(x_{1,2}=\mu\mp R\) w.r.t, $\mu$, we have, letting $r_1=R/\sigma_1$ and $r_2=R/\sigma_2$,
\[
\begin{aligned}
\int_{-\infty}^{x_1} p(x)\,dx &= \Phi(-r_1)=1-\Phi(r_1), &\qquad \int_{-\infty}^{x_1} q(x)\,dx &= \Phi(-r_2)=1-\Phi(r_2);
\\
\int_{x_1}^{x_2} q(x)\,dx &= 2\Phi(r_2)-1, &\qquad \int_{x_1}^{x_2} p(x)\,dx &= \Phi(r_1)-\Phi(-r_1)=2\Phi(r_1)-1; \\
\int_{x_2}^{\infty} p(x)\,dx &= 1-\Phi(r_1), & \qquad \int_{x_2}^{\infty} q(x)\,dx &= 1-\Phi(r_2).
\end{aligned}
\]
Now adding up the terms:
\[
\begin{aligned}
\int\min(p,q) &= \bigl(1-\Phi(r_1)\bigr)+\bigl(2\Phi(r_2)-1\bigr)+\bigl(1-\Phi(r_1)\bigr)
=1+2\bigl(\Phi(r_2)-\Phi(r_1)\bigr),\\[4pt]
\int\max(p,q) &= \bigl(1-\Phi(r_2)\bigr)+\bigl(2\Phi(r_1)-1\bigr)+\bigl(1-\Phi(r_2)\bigr)
=1+2\bigl(\Phi(r_1)-\Phi(r_2)\bigr).
\end{aligned}
\]
As a consequence, setting $A=\Phi(r_1)-\Phi(r_2)$,
\[
d(p,q)=1-\frac{\int\min(p,q)}{\int\max(p,q)}
=1-\frac{1-2A}{1+2A}
=\frac{4A}{1+2A}.
\]
As immediate observations we have:
\begin{itemize}
  \item If $\sigma_1\to\sigma_2$ then $R\to 0$, therefore $r_1,r_2\to 0$ and $A\to0$. As a consequence $d(p,q)\to 0$: the densities are almost equivalent and the stability metric yields small values. 
  \item If $\sigma_2/\sigma_1 \to +\infty$ (or symmetrically  $\sigma_1/\sigma_2 \to 0^+$), then we have
  \[
  r_1\to+\infty,\qquad r_2\to 0,
  \]
  hence \(A=\Phi(r_1)-\Phi(r_2)\to 1-\tfrac12=\tfrac12\) and so
  \[
  d(p,q)\to \frac{4\cdot (1/2)}{1+2\cdot (1/2)}=1.
  \]
The intuition is that a distribution sharply concentrated around its mean reflects a confident feature importance assignment, whereas a distribution with larger variance indicates greater uncertainty. The metric $d$ correctly captures this difference by assigning a large distance between the two distributions. However, when the corresponding features have the same importance, their relative order remains indistinguishable. Therefore, while the metric is well-posed, this example highlights a limitation of $k$-stability.
\end{itemize}

\end{document}